\newtheorem{theorem}{Theorem}
\newtheorem{corollary}{Corollary}
\newtheorem{proposition}{Proposition}
\title{Better-than-Demonstrator Imitation Learning via Automatically-Ranked Demonstrations}
\author{
  Daniel S. Brown{\rm,} Wonjoon Goo {\rm and} Scott Niekum\\
  Department of Computer Science\\
  University of Texas at Austin 
  United States\\
  \texttt{\{dsbrown, wonjoon, sniekum\}@cs.utexas.edu} \\
%   \And
%   Wonjoon Goo \\
%   Affiliation \\
%   Address \\
%   \texttt{email} \\
%   \AND
%   Coauthor \\
%   Affiliation \\
%   Address \\
%   \texttt{email} \\
  %% \And
  %% Coauthor \\
  %% Affiliation \\
  %% Address \\
  %% \texttt{email} \\
  %% \And
  %% Coauthor \\
  %% Affiliation \\
  %% Address \\
  %% \texttt{email} \\
}
\begin{document}
\maketitle

%===============================================================================

\begin{abstract}
The performance of imitation learning is typically upper-bounded by the performance of the demonstrator. While recent empirical results demonstrate that ranked demonstrations allow for better-than-demonstrator performance, preferences over demonstrations may be difficult to obtain, and little is known theoretically about when such methods can be expected to successfully extrapolate beyond the performance of the demonstrator. To address these issues, we first contribute a sufficient condition for better-than-demonstrator imitation learning and provide theoretical results showing why preferences over demonstrations can better reduce reward function ambiguity when performing inverse reinforcement learning. Building on this theory, we introduce Disturbance-based Reward Extrapolation (D-REX), a ranking-based imitation learning method that injects noise into a policy learned through behavioral cloning to automatically generate ranked demonstrations. These ranked demonstrations are used to efficiently learn a reward function that can then be optimized using reinforcement learning. We empirically validate our approach on simulated robot and Atari imitation learning benchmarks and show that D-REX outperforms standard imitation learning approaches and can significantly surpass the performance of the demonstrator. D-REX is the first imitation learning approach to achieve significant extrapolation beyond the demonstrator's performance without additional side-information or supervision, such as rewards or human preferences. By generating rankings automatically, we show that preference-based inverse reinforcement learning can be applied in traditional imitation learning settings where only unlabeled demonstrations are available. 
\end{abstract}

% Two or three meaningful keywords should be added here
\keywords{imitation learning, reward learning, learning from preferences}

%===============================================================================

\section{Introduction} %tell story here
Imitation learning is a popular paradigm to teach robots and other autonomous agents to perform complex tasks simply by showing examples of how to perform the task. However, imitation learning methods typically find policies whose performance is upper-bounded by the performance of the demonstrator. While it is possible to learn policies that perform better than a demonstrator, existing methods either require access to a hand-crafted reward function \cite{taylor2011integrating,hester2018deep,gao2018reinforcement,sarafian2018safe} or a human supervisor who acts as a reward or value function during policy learning \cite{warnell2017deep,christiano2017deep,browngoo2019trex}. Recent empirical results \cite{browngoo2019trex} give evidence that better-than-demonstrator performance can be achieved using ranked demonstrations; however, theoretical conditions for improvement over a demonstrator are lacking. This lack of theory makes it difficult to predict when current imitation learning approaches may exceed the performance of the demonstrator and precludes using theory to design better imitation learning algorithms.

In this paper, we first present theoretical results for when better-than-demonstrator performance is possible in an inverse reinforcement learning (IRL) setting \cite{abbeel2004apprenticeship}, where the goal is to recover a reward function from demonstrations. %In particular, we provide a sufficient condition for extrapolation which depends on the error in the learned reward function found via inverse reinforcement learning. 
We then present theoretical results demonstrating that rankings (or alternatively, pairwise preferences) over demonstrations can enable better-than-demonstrator performance by reducing error and ambiguity in the learned reward function. Next, we address the problem of leveraging the benefits of reward learning via ranked demonstrations in a way that does not require human rankings.
Recently, Brown et al. \cite{browngoo2019trex} proposed Trajectory-ranked Reward Extrapolation (T-REX), an imitation learning approach that uses a set of ranked demonstrations to learn a reward function that allows better-than-demonstrator performance without requiring human supervision during policy learning. However, requiring a demonstrator to rank demonstrations can be tedious and error prone, and precludes learning from prerecorded, unranked demonstrations, or learning from demonstrations of similar quality that are difficult to rank. Thus, we investigate whether it is possible to generate a set of ranked demonstrations, in order to surpass the performance of a demonstrator, without requiring supervised preference labels or reward information.

\begin{figure}
    \centering
    \subfigure[Demonstration]{
        \includegraphics[width=.18\linewidth]{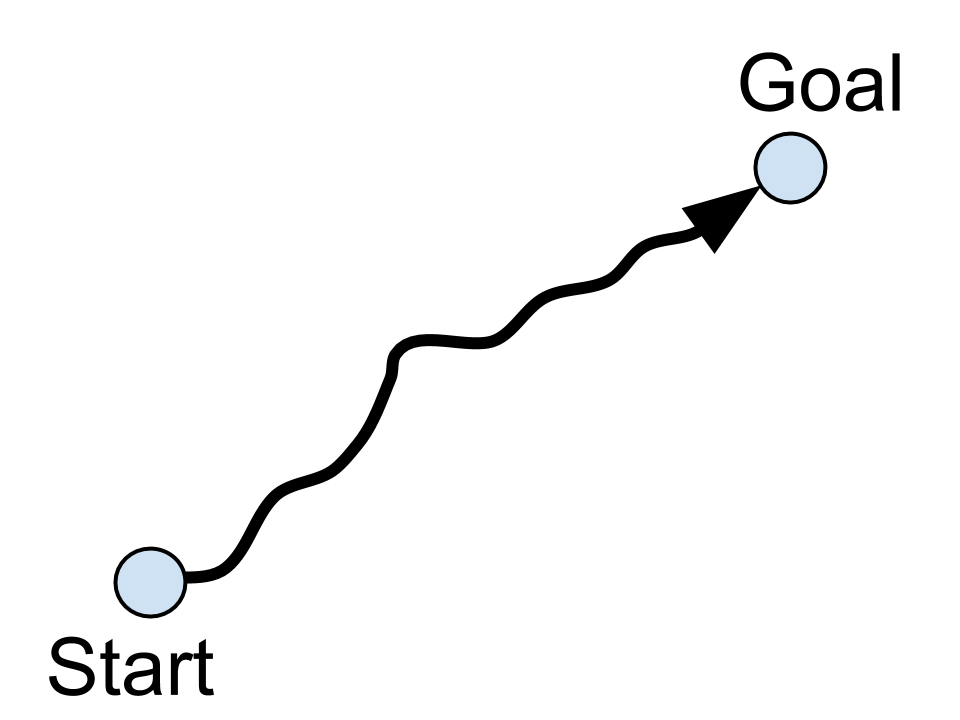}
        
    }
     \subfigure[Small noise]{
        \includegraphics[width=.18\linewidth]{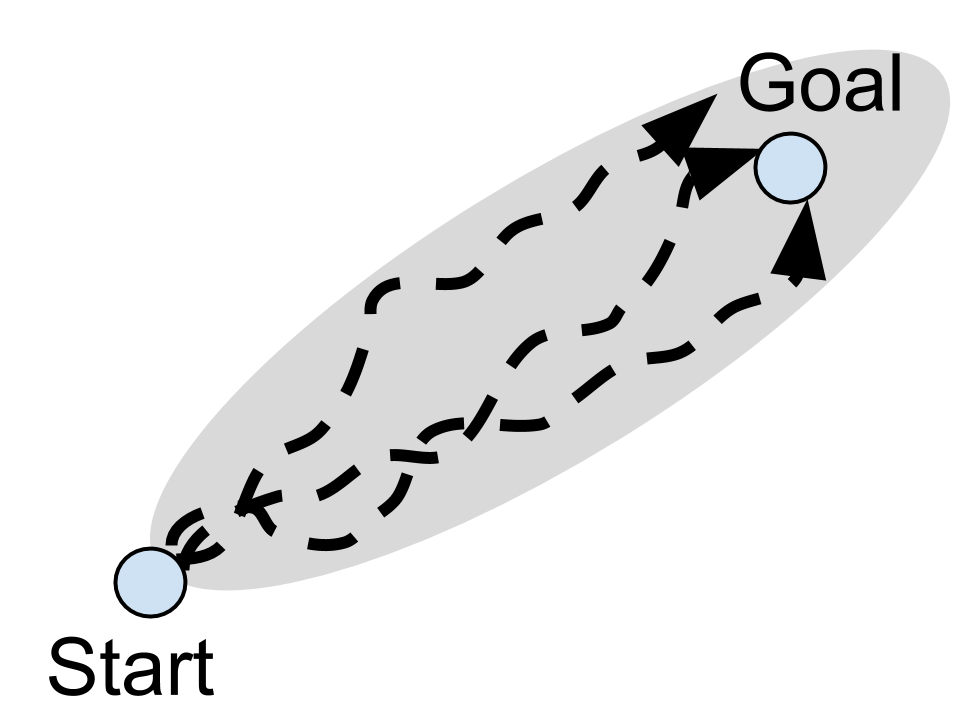}
        
    }
    \subfigure[Larger noise]{
        \includegraphics[width=.18\linewidth]{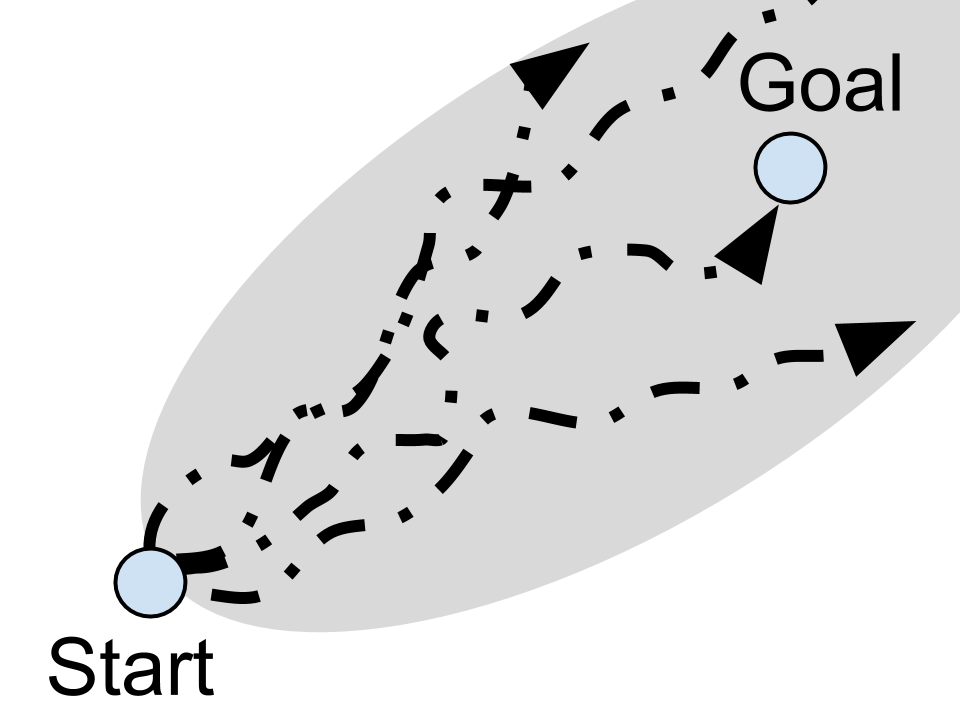}
        
    }
     \subfigure[Learned reward function from ranking: $(a) \succ (b) \succ (c)$]{
        \includegraphics[width=.18\linewidth]{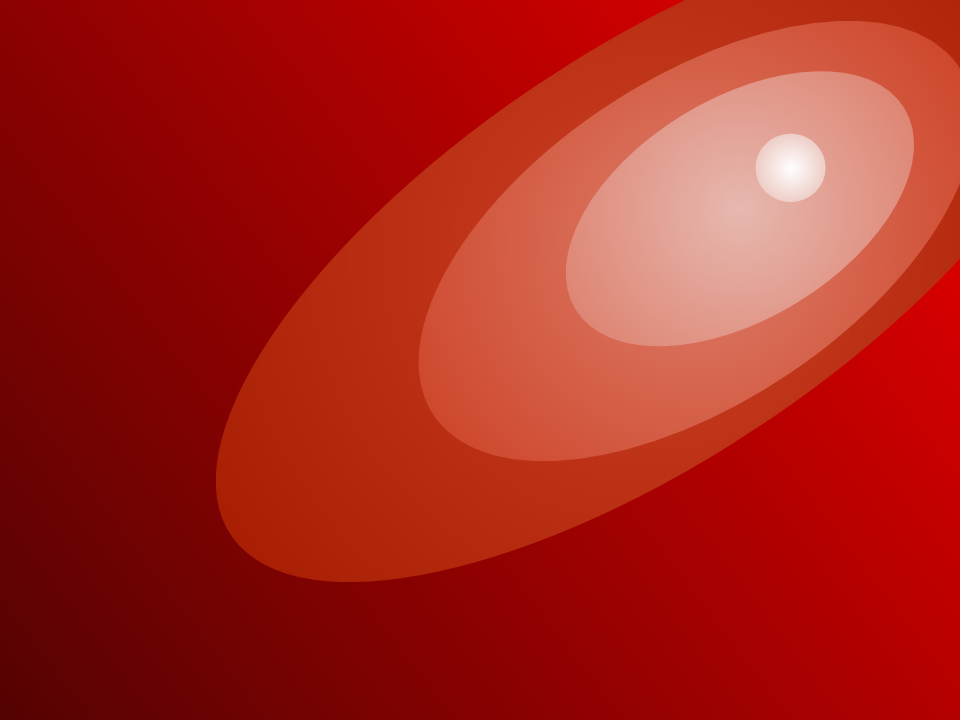}
       
    }
    %\hspace{0.4cm}
     \subfigure[Optimized policy]{
        \includegraphics[width=.18\linewidth]{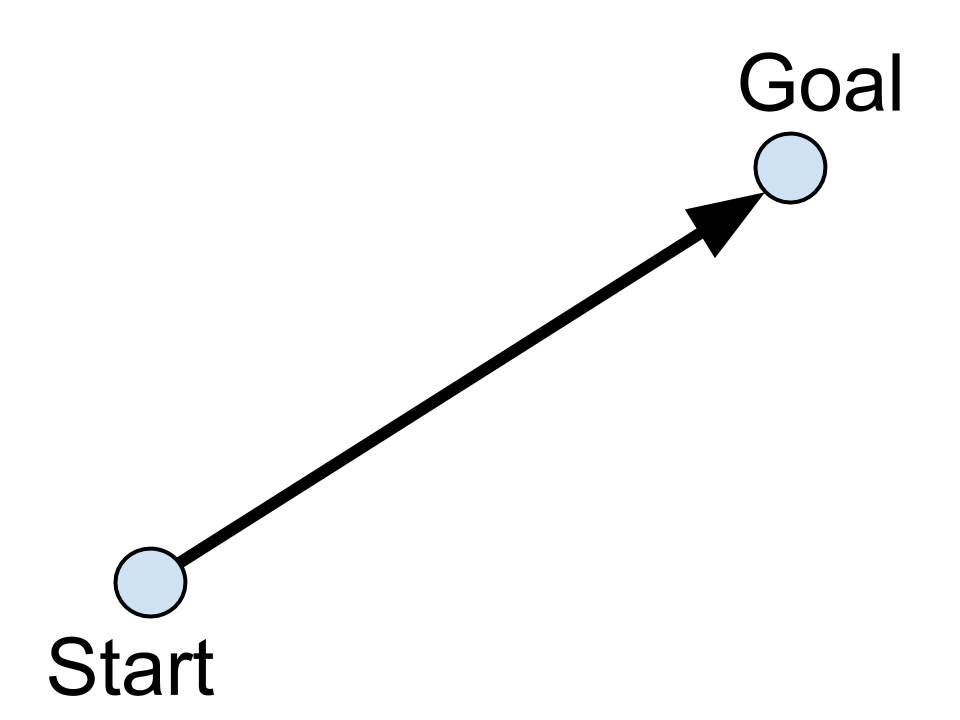}
        
    }
    \caption{\textbf{D-REX high-level approach:} given a suboptimal demonstration (a), we run behavioral cloning to approximate the demonstrator's policy. By progressively adding more noise to this cloned policy ((b) and (c)), we are able to automatically synthesize a preference ranking: $(a) \succ (b) \succ (c)$. Using this ranking, we learn a reward function (d) which is then optimized using reinforcement learning to obtain a policy (e) that performs better than the demonstrator.}
    \label{fig:addingNoiseForRankings}
\end{figure}

We propose Disturbance-based Reward Extrapolation (D-REX), a ranking-based reward learning algorithm that does not require ranked demonstrations. Our approach injects noise into a policy learned through behavioral cloning to automatically generate ranked policies of varying performance. D-REX makes the weak assumption that the demonstrations are better than a purely random policy, and that adding increasing levels of noise into a cloned policy will result in increasingly worse performance, converging to a random policy in the limit. Our approach is summarized in Figure~\ref{fig:addingNoiseForRankings}. The intuition behind this approach is that generating ranked trajectories via noise injection reveals relative weightings between reward features: features that are more prevalent in noisier trajectories are likely inversely related to the reward, whereas features that are more common in noise-free trajectories are likely features which are positively correlated with the true reward. Furthermore, adding noise provides a form of feature selection since, if a feature is equally common across all levels of noise, then it likely has no impact on the true reward function and can be ignored. 

By automatically generating rankings, preference-based imitation learning methods \cite{sadigh2017active,ibarz2018reward,palan2019learning,browngoo2019trex} can be applied in standard imitation learning domains where rankings are unavailable. We demonstrate this by combining automatic rankings via noise-injections with a state-of-the-art imitation learning algorithm that uses ranked demonstrations \cite{browngoo2019trex}. We empirically validate our approach on simulated robotics and Atari benchmarks and find that D-REX results in policies that can both significantly outperform the demonstrator as well as significantly outperform standard imitation learning. To the best of our knowledge, D-REX is the first imitation learning approach to achieve significant performance improvements over the demonstrations without requiring extra supervision or additional side-information, such as ground-truth rewards or human preferences.

\section{Related Work}
Imitation learning has grown increasingly popular in recent years \cite{osa2018algorithmic,arora2018survey}, with many applications in robotics \cite{finn2016guided,mandlekar2018roboturk,kroemer2019review,codevilla2018end}. However, little work has addressed the problem of achieving better-than-demonstrator performance. 
%To the best of our knowledge, our work is the first to show better-than-demonstrator performance in high-dimensional tasks such as Atari, without requiring to ground-truth rewards, ranking labels, or human supervision.
When ground-truth rewards are known, it is common to initialize a policy using demonstrations and then improve this policy using reinforcement learning \cite{taylor2011integrating,pastor2011skill,hester2018deep}. However, 
%it is often still difficult to perform significantly better than the demonstrator \cite{hester2018deep} and 
designing good reward functions for reinforcement learning can be difficult and can easily lead to unintended behaviors \cite{ng1999policy,amodei2016concrete}.

Inverse reinforcement learning can be used to estimate a demonstrator's intent by inferring a reward function that explains the demonstrations. While there has been some work on learning from suboptimal demonstrations, most approaches either require suboptimal demonstrations to be manually clustered \cite{grollman2011donut} or labeled \cite{shiarlis2016inverse}. Other methods are robust to unlabeled, suboptimal demonstrations, but require most demonstrations to come from an expert in order to correctly identify which demonstrations are anomalous \cite{zheng2014robust,choi2019robust}. Syed and Schapire \cite{syed2008game} proved that knowing which features contribute positively or negatively to the true reward allows an apprenticeship policy to outperform the demonstrator. However, their approach requires hand-crafted, linear features, knowledge of the true signs of the rewards features, and repeatedly solving a Markov decision process. 

Preference learning \cite{eric2008active} is another way to potentially learn better-than-demonstrator policies. \citet{sadigh2017active} and \citet{christiano2017deep} propose reward learning approaches that use active learning to collect pairwise preferences labels. \citet{ibarz2018reward} and \citet{palan2019learning} combine demonstrations with active preference learning during policy optimization. Rather than collecting pairwise preferences via active queries, other research has investigated learning better-than-demonstrator policies via prelabeled demonstrations \cite{wu2019imitation,browngoo2019trex}. Brown et al. \cite{browngoo2019trex} propose Trajectory-ranked Reward Extrapolation (T-REX), an algorithm that uses a set of pre-ranked demonstrations to learn a reward function for policy optimization. Brown et al. evaluate T-REX on a variety of MuJoCo and Atari benchmarks and show that policies optimized via T-REX are able to consistently surpass the performance of a suboptimal demonstrator, but provide no theory to shed light on when extrapolation is possible. Our work provides a better theoretical understanding of when better-than-demonstrator performance is possible and why ranked demonstrations can help. Furthermore, our work demonstrates for the first time that ranking-based imitation learning approaches are applicable even in cases where human rankings are unavailable.

%\subsection{Noise injection for imitation learning}
Prior work on imitation learning has investigated the use of random or noisy trajectories. Boularias et al. \cite{boularias2011relative} and Kalakrishnan et al. \cite{kalakrishnan2013learning} use uniformly random and locally perturbed trajectories, respectively, to estimate the partition function for Maximum Entropy inverse reinforcement learning \cite{ziebart2008maximum}. Both methods seek a linear combination of predefined features such that the returns of the demonstrations are maximized with respect to the random trajectories. These methods can be seen as a special case of our proposed method, where only one level of noise is used and where the reward function is represented as a linear combination of known features.  
%However, this approach does not employ a ranking loss, only considers two classes of trajectories, demonstrations and local perturbations of demonstrations, and learns a reward function that is a linear combination of hand-crafted features. 
Disturbances for Augmenting Robot Trajectories (DART) \cite{laskey2017dart} is a recently proposed behavioral cloning approach 
%that avoids learning a reward function and simply learns a mapping from states to actions. However, simply learning this function from a static dataset has the problem of covariate shift: new states that are different than those seen in training will cause the learned policy to deviate even farther from known states and can cause cascading errors. 
that adds noise during demonstrations to collect a richer set of state-action pairs for behavioral cloning. DART avoids the problem of compounding error that is common to most behavioral cloning approaches by repeatedly requesting and perturbing new demonstrations.
%By perturbing the demonstrator, DART collects data in places where a learned policy might end up in due to error. By obtaining labels from these new states a better cloned policy is obtained. 
Instead of repeatedly collecting perturbed trajectories from the demonstrator, we instead propose to collect a small number of initial demonstrations, run behavioral cloning once, and then inject varying amounts of noise into the cloned policy. This automatically creates a large set of ranked demonstrations for reward learning, without requiring a human to provide preference labels.

\section{Problem Statement}
Our goal is to achieve better-than-demonstrator performance via imitation learning. We model the environment as a Markov decision process (MDP) consisting of a set of states $\mathcal{S}$, actions $\mathcal{A}$, transition probabilities $P:\mathcal{S}\times \mathcal{A} \times \mathcal{S} \rightarrow [0,1]$, reward function $R^*: \mathcal{S} \rightarrow \mathbb{R}$, and discount factor $\gamma \in [0,1)$. 
A policy $\pi$ is a probability distribution over actions given state. 
%A policy $\pi$ is a mapping from states to probabilities over actions, $\pi(a | s) \in [0,1]$. 
Given a policy and an MDP, the expected discounted return of the policy is given by $J(\pi|R^*) = \mathbb{E}_\pi[\sum_{t=0}^\infty \gamma^t R^*(s_t)]$. Similarly, the return of a trajectory consisting of states and actions, $\tau=(s_0, a_0, s_1, a_1,\ldots,s_T, a_T)$, is given by $J(\tau|R^*) = \sum_{t=0}^T \gamma^t R^*(s_t)$.  

%Our problem is one of learning from demonstration \cite{Argall2009}. 
We assume that we have no access to the true reward function of the MDP. Instead, we are given a set of $m$ demonstrations $\mathcal{D} = \{\tau_1, \ldots \tau_m\}$, where each demonstrated trajectory is a sequence of states and actions, $\tau_i = (s_0,a_0,s_1,a_1, \ldots)$. We assume that the demonstrator is attempting (possibly unsuccessfully) to follow a policy that optimizes the true reward function $R^*$. Given the demonstrations $\mathcal{D}$, we wish to find a policy $\hat{\pi}$ that can extrapolate beyond the performance of the demonstrator. We say a policy $\hat{\pi}$ can extrapolate beyond of the performance of the demonstrator if it achieves a larger expected return than the demonstrations, when evaluated under the true reward function $R^*$, i.e., $J(\hat{\pi}|R^*) > J(\mathcal{D}|R^*)$, where $J(\mathcal{D}|R^*) = \frac{1}{|\mathcal{D}|}\sum_{\tau \in \mathcal{D}} J(\tau|R^*)$. Similarly, we say that a learned policy $\hat{\pi}$ extrapolates beyond the performance of the best demonstration if $J(\hat{\pi}|R^*) > \max_{\tau \in \mathcal{D}} J(\tau|R^*)$.

\section{Extrapolating Beyond a Demonstrator: Theory}
%When is it theoretically feasible to extrapolate beyond the performance of a demonstrator? 
We first provide a sufficient condition under which it is possible to achieve better-than-demonstrator performance in an inverse reinforcement learning (IRL) setting, where the goal is to recover the demonstrator's reward function which is then used to optimize a policy \cite{arora2018survey}.
%We first note that if the true reward function can be recovered from demonstrations, then, assuming access to a perfect MDP solver, better-than-demonstrator performance is guaranteed as long as the demonstrator was following a suboptimal policy.
% \begin{proposition}
% If the demonstrator's reward function can be represented as a linear combination of features, then it is possible to perform better than the demonstrator.
% \end{proposition}
% \begin{proof}
% Let the intrinsic reward function of the demonstrator be $R(s) = w^T \phi(s)$ for feature vector $\phi(s)$. Consider a suboptimal demonstrator with feature expectations $\Phi_{\pi_D}$, making the expected return of the demonstrator $w^T \Phi_{\pi_D} < w^T \Phi(\pi^*)$, where $\pi^*$ is the optimal policy under $R(s)$. Thus, if the true reward weights can be recovered and a policy optimized for these weights, then it is possible to exceed the performance of the demonstrator.
% \end{proof}
%However, in IRL, we do not have access to the reward function and must first approximate it from demonstrations and then approximate an optimal policy for that reward function using reinforcement learning.
We consider a learner that approximates the reward function of the demonstrator with a linear combination of features: $R(s)  = w^T \phi(s)$.\footnote{Our results also hold for reward functions of the form $R(s,a) = w^T \phi(s,a)$. } These can be arbitrarily complex features, such as the activations of a deep neural network. The expected return of a policy when evaluated on $R(s)$ is given by 
\begin{equation}
J(\pi|R) = \mathbb{E}_\pi\bigg[\sum_{t=0}^\infty \gamma^t R(s_t)\bigg] = w^T\mathbb{E}_\pi\bigg[\sum_{t=0}^\infty \gamma^t \phi(s_t)\bigg] = w^T \Phi_\pi,
\end{equation}
where $\Phi_\pi$ are the expected discounted feature counts that result from following the policy $\pi$.

%We first consider the classical IRL problem where the features of the environment are fixed, but the weights of the features need to be estimated. In particular, we assume an oracle that can recover an optimal policy given a reward function and focus on the effects of approximation error in the learned reward function. The following theorem provides sufficient conditions for extrapolation in this case:

\begin{theorem}
If the estimated reward function is $\hat{R}(s) = w^T \phi(s)$, the true reward function is $R^*(s) = \hat{R}(s) + \epsilon(s)$ for some error function $\epsilon : \mathcal{S} \rightarrow \mathbb{R}$, and $\|w\|_1 \leq 1$, then extrapolation beyond the demonstrator, i.e., $J(\hat{\pi}|R^*) > J(\mathcal{D}| R^*)$, is guaranteed if : 
\begin{equation}
J(\pi^*_{R^*}|R^*) - J(\mathcal{D}|R^*) > \epsilon_\Phi + \frac{2 \|\epsilon\|_\infty}{1 - \gamma}
\end{equation}
where $\pi^*_{R^*}$ is the optimal policy under $R^*$, $\epsilon_\Phi = \|\Phi_{\pi^*_{R^*}} - \Phi_{\hat{\pi}}\|_\infty$ and $\|\epsilon\|_\infty=\sup\left\{\,\left|\epsilon(s)\right|:s\in \mathcal{S}\,\right\}$.
\end{theorem}
All proofs are given in the appendix.
Intuitively, extrapolation depends on the demonstrator being sufficiently suboptimal, the error in the learned reward function being sufficiently small, and the state occupancy of the imitation policy, $\hat{\pi}$, being sufficiently close to $\pi^*_{R^*}$. If we can perfectly recover the reward function, then reinforcement learning can be used to ensure that $\epsilon_\Phi$ is small. Thus, we focus on improving the accuracy of the learned reward function via automatically-ranked demonstrations. The learned reward function can then be optimized with any reinforcement learning algorithm \cite{sutton1998introduction}.

\subsection{Extrapolation via ranked demonstrations}
The previous results demonstrate that in order to extrapolate beyond a suboptimal demonstrator, it is sufficient to have small reward approximation error and a good policy optimization algorithm. However, the following proposition, adapted from \cite{pabloranked}, shows that the reward function learned by standard IRL may be quite superficial and miss potentially important details, whereas enforcing a ranking over trajectories leads to a more accurate estimate of the true reward function. 

\begin{proposition}\label{prop:mdp_example_woutproof}
There exist MDPs with true reward function $R^*$, expert policy $\pi_E$, approximate reward function $\hat{R}$, and non-expert policies $\pi_1$ and $\pi_2$, such that 
\begin{eqnarray}
&&\pi_E = \arg \max_{\pi \in \Pi} J(\pi |R^*) \;\text{  and  }\; J(\pi_1 | R^*) \ll J(\pi_2 | R^*) \label{eqn:mdpexample1} \\
&&\pi_E = \arg \max_{\pi \in \Pi} J(\pi | \hat{R}) \;\text{  and  }\; J(\pi_1 | \hat{R}) = J(\pi_2 | \hat{R}) \label{eqn:mdpexample2}. 
\end{eqnarray}
However, enforcing a preference ranking over trajectories, $\tau^* \succ \tau_2 \succ \tau_1$, where $\tau^* \sim \pi^*$, $\tau_2 \sim \pi_2$, and $\tau_1 \sim \pi_1$, results in a learned reward function $\hat{R}$, such that 
\begin{equation}
\pi_E = \arg \max_{\pi \in \Pi} J(\pi |\hat{R}) \;\text{  and  }\; J(\pi_1 | \hat{R}) <  J(\pi_2 | \hat{R}) \label{eqn:mdpexample3}.
\end{equation}
\end{proposition}

Proposition~\ref{prop:mdp_example_woutproof} proves the existence of MDPs where an approximation of the true reward leads to an optimal policy, yet the learned reward reveals little about the underlying reward structure of the MDP.
This is problematic for several reasons. The first problem is that if the learned reward function is drastically different than the true reward, this can lead to poor generalization. Another problem is that many learning from demonstration methods are motivated by  providing \textit{non-experts} the ability to program by example. Some non-experts will be good at personally performing a task, but may struggle when giving kinesthetic demonstrations \cite{akgun2012trajectories} or teleoperating a robot \cite{chuck2017statistical,kent2017comparison}. 
%Hence, demonstrations from these types of ‘non-experts’ will likely be suboptimal. 
Other non-experts may not be able to personally perform a task at a high level of performance due to lack of precision or timing, or due to physical limitations or impairment. Thus, the standard IRL approach of finding a reward function that maximizes the likelihood of the demonstrations may lead to an incorrect, superficial reward function that overfits to suboptimal user behavior in the demonstrations. 

Indeed, it has been proven that it is impossible to recover the correct reward function without additional information beyond observations, regardless of whether the policy is optimal \cite{ng2000algorithms} or suboptimal  \cite{armstrong2018occam}. As demonstrated in Proposition~\ref{prop:mdp_example_woutproof}, preference rankings can help to alleviate reward function ambiguity. If the true reward function is a linear combination of features, then the feasible region of all reward functions that make a policy optimal can be defined as an intersection of half-planes \cite{brown2019machine}: 
% \begin{equation}
% H_\pi = \bigcap_{\substack{(s,a) \in \pi \\ b \in \mathcal{A}}} w^T (\Phi_\pi(s,a) - \Phi_\pi(s,b)) \geq 0, 
% \end{equation}
%\begin{equation}
$H_\pi = \bigcap_{\pi' \in \Pi} w^T (\Phi_\pi - \Phi_{\pi'}) \geq 0$.
%\end{equation}
We define the \textit{reward ambiguity}, $G(H_{\pi})$, as the volume of this intersection of half-planes: 
%\begin{equation}
$G(H_{\pi}) = \text{Volume}(H_\pi)$,
%\end{equation}
where we assume without loss of generality that $\|w\| \leq 1$, to ensure this volume is bounded.
In Appendix~\ref{app:fullranking} we prove that a total ranking over policies can result in less reward ambiguity than performing IRL on the optimal policy.
\begin{proposition}
Given a policy class $\Pi$, an optimal policy $\pi^* \in \Pi$ and a total ranking over $\Pi$, and true reward function $R^*(s) = w^T \phi(s)$,  the reward ambiguity resulting from $\pi^*$ is greater than or equal to the reward ambiguity of using a total ranking, i.e., $G(H_\pi^*) \geq G(H_{\rm ranked})$. 
\end{proposition}

%Consider the problem of learning from a sequence of $m$ demonstrated trajectories $\tau_1, \ldots, \tau_m$, ranked according to preference such that $\tau_1 \prec \tau_2, \ldots \prec \tau_m$. 
Learning a reward function that respects a set of strictly ranked demonstrations avoids some of the ill-posedness of IRL \cite{ng2000algorithms} by eliminating a constant, or all-zero reward function. Furthermore, ranked demonstrations provide explicit information about both what to do as well as what \textit{not} to do in an environment and each pairwise preference over trajectories gives a half-space constraint on feasible reward functions. %Even given preference rankings over all possible trajectories, the structure of the MDP may make it impossible to completely remove uncertainty over the the true reward function; however, i
In Appendix~\ref{app:halfspace} we prove that sampling random half-space constraints results in an exponential decrease in reward function ambiguity.
\begin{corollary}
To reduce reward function ambiguity by x\% it suffices to have $k = \log_2 (1/(1-x/100))$
random half-space constraints over reward function weights.
\end{corollary}
%In Appendix~\ref{app:halfspace} we prove that random halfspace constraints over feasible reward function weights results in an exponential decrease in reward function ambiguity. %We do not assume access to an oracle for labeling, so we cannot just ask for preferences over random trajectories. However, i
In practice, sampling random half-space constraints on the ground-truth reward function is infeasible. Instead, our proposed approach for better-than-demonstrator imitation learning uses noise injection to produce a wide variety of automatically-ranked of demonstrations in order to reduce the learner's reward function ambiguity. As we show in the next section, automatically-generating preferences over demonstrations also improves the efficiency of IRL by removing the need for an MDP solver in the inner-loop and turning IRL into a supervised learning problem \cite{browngoo2019trex}. \citet{amin2016towards} proved that a logarithmic number of demonstrations from a family of MDPs with different transition dynamics is sufficient to resolve reward ambiguity in IRL. We generate ranked trajectories via noise injection, which can be seen as an efficient heuristic for generating demonstrations under different transition dynamics.  %Our results in this paper demonstrate that ranking-based reward learning has the potential to be applied in any situation where unlabeled demonstrations are available.

\section{Algorithm}
We now describe our approach for achieving better-than-demonstrator imitation learning without requiring human-provided preference labels. We first briefly review a recent state-of-the-art IRL algorithm that learns from ranked demonstrations. We then describe our proposed approach to generate these rankings automatically via noise injection. Videos and code are available online.\footnote{The project website and code can be found at \url{https://dsbrown1331.github.io/CoRL2019-DREX/}}

\subsection{Trajectory-ranked Reward Extrapolation (T-REX)}
\label{sec:trex}
Given a sequence of $m$ demonstrations ranked from worst to best, $\tau_1, \ldots, \tau_m$, T-REX \cite{browngoo2019trex} first performs reward inference by approximating the reward at state $s$ using a neural network, $\hat{R}_\theta(s)$, such that $\sum_{s \in \tau_i} \hat{R}_\theta(s) < \sum_{s \in \tau_j} \hat{R}_\theta (s)$ when $\tau_i \prec \tau_j$.
The reward function $\hat{R}_\theta$ is learned via supervised learning, using a pairwise ranking loss \cite{cao2007learning} based on the Luce-Shephard choice rule \cite{luce2012individual}:
\begin{equation}
 \mathcal{L}(\theta) \approx - \frac{1}{|\mathcal{P}|}\sum_{(i,j) \in \mathcal{P}} \log \frac{\exp \displaystyle\sum_{s \in \tau_j} \hat{R}_\theta(s)}{\exp \displaystyle\sum_{s \in \tau_i} \hat{R}_\theta(s) + \exp \displaystyle\sum_{s \in \tau_j} \hat{R}_\theta(s)},
\end{equation}
where $\mathcal{P} = \{(i,j) : \tau_i \prec \tau_j  \}$.
%This loss function induces a reward function that gives higher ranked trajectories higher predicted returns. 
After learning a reward function, T-REX can be combined with any RL algorithm to optimize a policy $\hat{\pi}$ with respect to $\hat{R}_\theta(s)$. \citet{browngoo2019trex} demonstrated that T-REX typically results in policies that perform significantly better than the best demonstration.

\subsection{Disturbance-based Reward Extrapolation (D-REX)}

 \begin{algorithm}[t] % enter the algorithm environment
 \caption{D-REX: Disturbance-based Reward Extrapolation} % give the algorithm a caption
 \label{alg:D-REX} % and a label for \ref{} commands later in the document
 \begin{algorithmic}[1] % enter the algorithmic environment
     \REQUIRE Demonstrations $\mathcal{D}$, noise schedule $\mathcal{E}$, number of rollouts $K$
     \STATE Run behavioral cloning on demonstrations $\mathcal{D}$ to obtain policy $\pi_{\rm BC}$
 	\FOR{$\epsilon_i \in \mathcal{E}$}
 		\STATE Generate a set of $K$ trajectories from a noise injected policy $\pi_{\rm BC}(\cdot|\epsilon_i)$
 	\ENDFOR
 	\STATE Generate automatic preference labels $\tau_i \prec \tau_j$ if $\tau_i\sim \pi_{\rm BC}(\cdot | \epsilon_i)$, $\tau_j\sim \pi_{\rm BC}(\cdot | \epsilon_j)$, and $\epsilon_i > \epsilon_j$
 	\STATE Run T-REX \cite{browngoo2019trex} on automatically ranked trajectories to obtain $\hat{R}$
 	\STATE Optimize policy $\hat{\pi}$ using reinforcement learning with reward function $\hat{R}$
 	\RETURN $\hat{\pi}$
 \end{algorithmic}
 \end{algorithm}

We now describe Disturbance-based Reward Extrapolation (D-REX), our proposed approach for automatically generating ranked demonstrations. Our approach is summarized in Algorithm~\ref{alg:D-REX}. We first take a set of unranked demonstrations and use behavioral cloning to learn a policy $\pi_{\rm BC}$. Behavioral cloning \cite{bain1999framework} treats each state action pair $(s,a) \in \mathcal{D}$ as a training example and seeks a policy $\pi_{\rm BC}$ that maps from states to actions. We model $\pi_{\rm BC}$ using a neural network with parameters $\theta_{\rm BC}$ and find these parameters using maximum-likelihood estimation such that 
$\theta_{\rm BC} = \arg \max_\theta \prod_{(s,a) \in \mathcal{D}} \pi_{\rm BC}(a | s)$. 
%For discrete action MDPs such as the Atari domain, we model the distribution over actions as a softmax distribution and train $\pi_{\rm BC}$ using a cross-entropy loss. 
By virtue of the optimization procedure, $\pi_{\rm BC}$ will usually only perform as well as the average performance of the demonstrator---at best it may perform slightly better than the demonstrator if the demonstrator makes mistakes approximately uniformly at random. %Our goal is to outperform the demonstrator using a ranked set of demonstrations. 

Our main insight is that if $\pi_{\rm BC}$ is significantly better than the performance of a completely random policy, then we can inject noise into $\pi_{\rm BC}$ and interpolate between the performance of $\pi_{\rm BC}$ and the performance of a uniformly random policy. In Appendix~\ref{app:degredation}, we prove that given a noise schedule $\mathcal{E}= (\epsilon_1, \epsilon_2, \ldots, \epsilon_d)$ consisting of a sequence of noise levels such that $\epsilon_1 > \epsilon_2 > \ldots > \epsilon_d$, then with high-probability, $J(\pi_{\rm BC}(\cdot|\epsilon_1)) < J(\pi_{\rm BC}(\cdot|\epsilon_2)) < \cdots < J(\pi_{\rm BC}(\cdot |\epsilon_d))$.
Given noise level $\epsilon \in \mathcal{E}$, we inject noise via an $\epsilon$-greedy policy such that with probability 1-$\epsilon$, the action is chosen according to $\pi_{\rm BC}$, and with probability $\epsilon$, the action is chosen uniformly at random within the action range.
% \begin{equation}\label{eqn:epsilon_greedy}
%     \pi_{\rm BC}(a_t | s_t, \epsilon) =  
%     \begin{cases} 
%       1 - \epsilon + \frac{1}{|\mathcal{A}|},& \text{if } \pi_{\rm BC}(s_t)=a_t\\
%       \frac{\epsilon}{|\mathcal{A}|}, & \text{otherwise}.
%   \end{cases}
% \end{equation}

For every $\epsilon$, we generate $K$ policy rollouts and thus obtain $K \times d$ ranked demonstrations, where each trajectory is ranked based on the noise level that generated it, with trajectories considered of equal preference if generated from the same noise level. 
Thus, by generating rollouts from $\pi_{\rm BC}(\cdot | \epsilon)$ with varying levels of noise, we can obtain an arbitrarily large number of ranked demonstrations: 
\begin{equation}
D_{\rm ranked} = \{ \tau_i \prec \tau_j :  \tau_i\sim \pi_{\rm BC}(\cdot | \epsilon_i), \tau_j\sim \pi_{\rm BC}(\cdot | \epsilon_j), \epsilon_i > \epsilon_j\}.
\end{equation}
Given these ranked demonstrations, we then use T-REX to learn a reward function $\hat{R}$ from which we can optimize a policy $\hat{\pi}$ using any reinforcement learning algorithm (see Appendix~\ref{app:drex_implementation} for details).

\section{Experimental Results}

\subsection{Automatically generating rankings via noise}
To test whether injecting noise can create high-quality, automatic rankings, we used simulated suboptimal demonstrations from a partially trained reinforcement learning agent. To do so, we used the Proximal Policy Optimization (PPO) \cite{schulman2017proximal} implementation from OpenAI Baselines \cite{baselines} to partially train a policy on the ground-truth reward function. We then ran behavioral cloning on these demonstrations and plotted the degradation in policy performance for increasing values of $\epsilon$.  

We evaluated noise degradation on the Hopper and Half-Cheetah domains in MuJoCo and on the seven Atari games listed in Table~\ref{tab:D-REX}. 
To perform behavioral cloning, we used one suboptimal demonstration trajectory of length 1,000 for the MuJoCo tasks and 10 suboptimal demonstrations for the Atari games. We then varied $\epsilon$ and generated rollouts for different noise levels. We plotted the average return along with one standard deviation error bars in Figure~\ref{fig:atari_noise_degradation} (see Appendix~\ref{ref:noise_degredation} for details). We found that behavioral cloning with small noise tends to have performance similar to that of the average performance of the 
demonstrator. As noise is added, the performance degrades until it reaches the level of a uniformly random policy ($\epsilon = 1$). These plots validate our assumption that, in expectation, adding increasing amounts of noise will cause near-monotonic performance degradation. 

\begin{figure}[t]
    \centering
    \subfigure[Hopper]{
        \includegraphics[width=.24\linewidth]{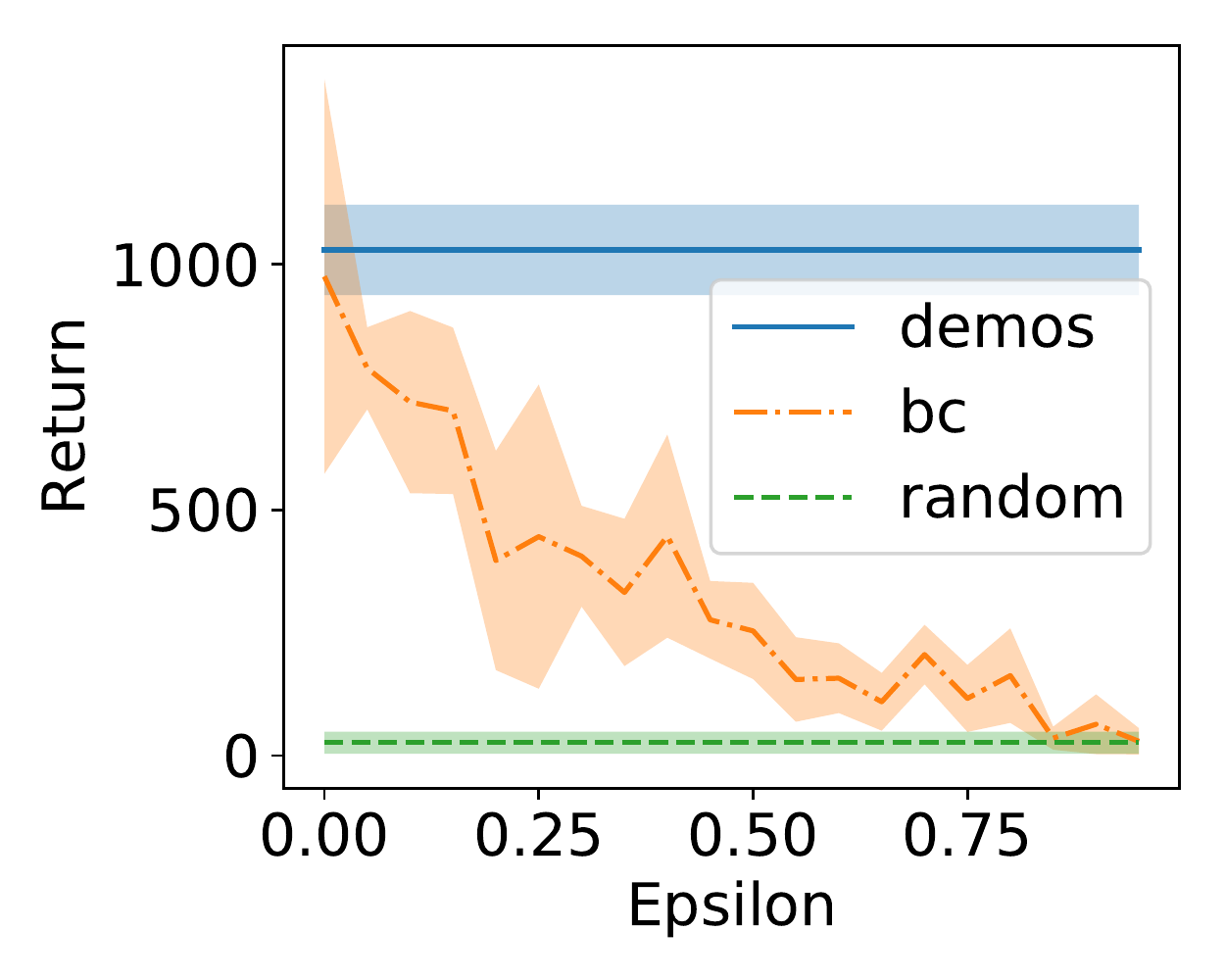}
    }
    \hspace{-0.35cm}
    \subfigure[Half-Cheetah]{
        \includegraphics[width=.24\linewidth]{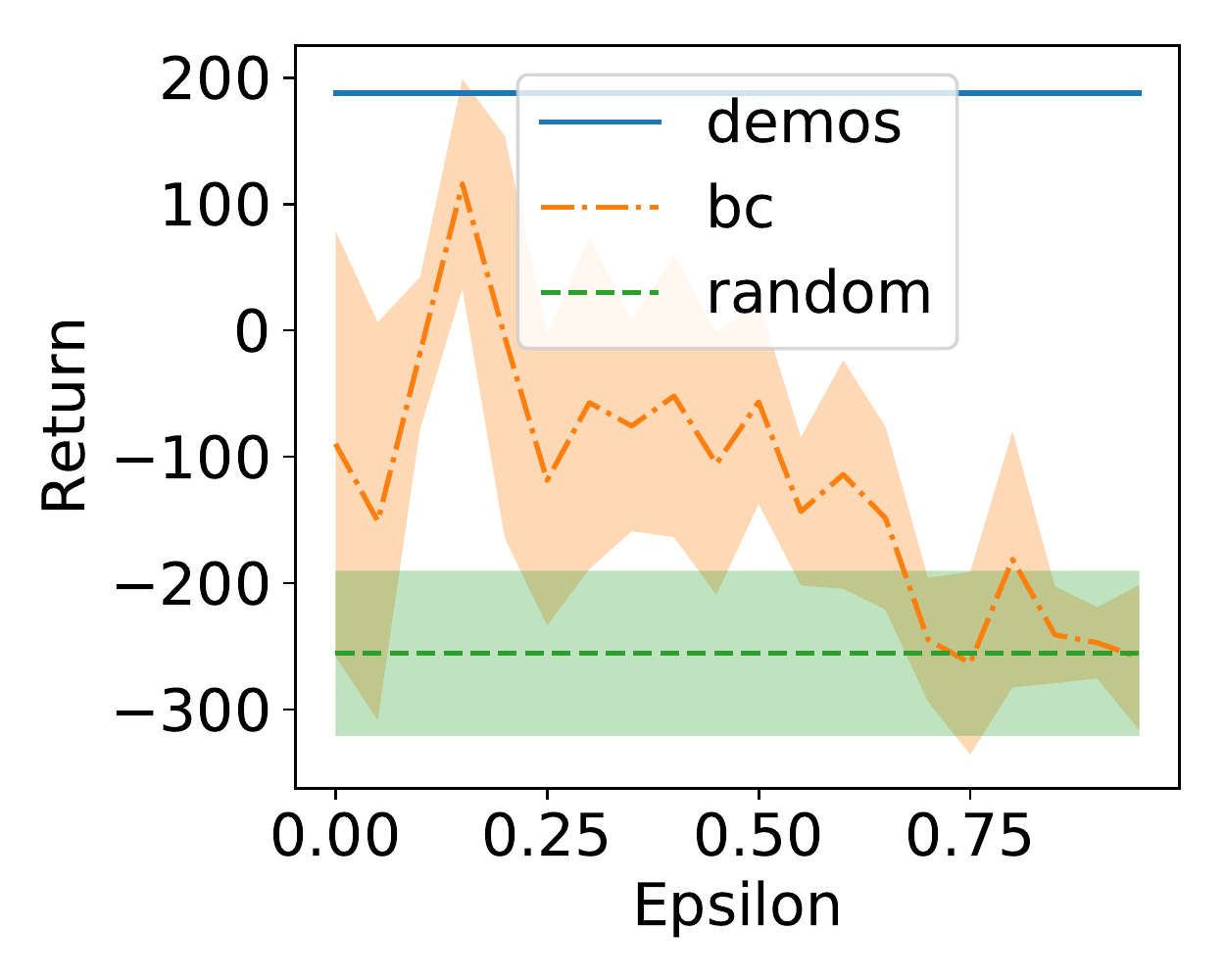}
    }
    \hspace{-0.35cm}
     \subfigure[Beam Rider]{
        \includegraphics[width=.24\linewidth]{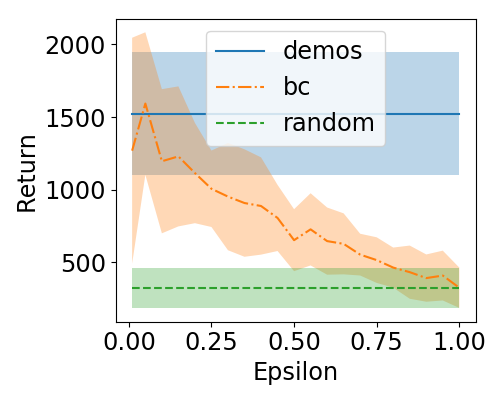}
    }
    \hspace{-0.35cm}
    \subfigure[Seaquest]{
        \includegraphics[width=.24\linewidth]{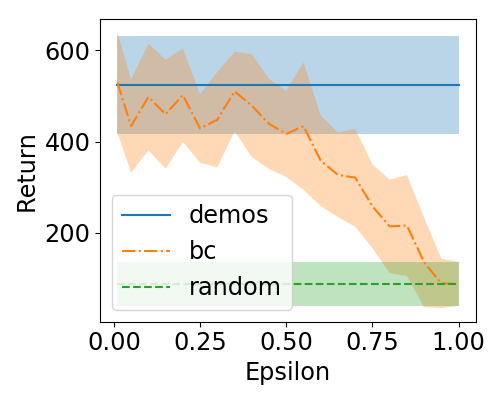}
    }
    \caption{Examples of the degradation in performance of an imitation policy learned via behavioral cloning as more noise is injected into the policy. Behavioral cloning is done on a 1,000-length trajectory (MuJoCo tasks) or 10 demonstrations (Atari games). Plots show mean and standard deviations over 5 rollouts (MuJoCo tasks) or 20 rollouts (Atari games).}
    \label{fig:atari_noise_degradation}
\end{figure}

\subsection{Reward extrapolation}
We next tested whether D-REX allows for accurate reward extrapolation. 
We used noise injection, as described in the previous section, to generate 100 synthetically-ranked demonstrations. For MuJoCo, we used the noise schedule consisting of 20 different noise levels, evenly spaced over the interval $[0,1)$ and generated $K=5$ rollouts per noise level. For Atari, we used the noise schedule $\mathcal{E}=(1.0, 0.75, 0.5, 0.25, 0.02)$ with $K=20$ rollouts per noise level. By automatically generating ranked demonstrations, D-REX is able to leverage a small number of unranked demonstrations to generate a large dataset of ranked demonstrations for reward function approximation. We used the T-REX algorithm \cite{browngoo2019trex} to learn a reward function from these synthetically ranked demonstrations.

To investigate how well D-REX learns the true reward function, we evaluated the learned reward function  $\hat{R}_\theta$ on the original demonstrations and the synthetic demonstrations obtained via noise injection. We then compared the ground-truth returns with the predicted returns under $\hat{R}_\theta$. %This allows us to measure whether the returns from the learned reward function are good can be used to rank which trajectories are better than others (under the ground-truth reward), despite the high variance in trajectory return due to epsilon-greedy sampling. 
We also tested reward extrapolation on a held-out set of trajectories obtained from PPO policies that were trained longer on the ground-truth reward than the policy used to generate the demonstrations for D-REX. These additional trajectories allow us to measure how well the learned reward function can extrapolate beyond the performance of the original demonstrations. We scale all predicted returns to be in the same range as the ground-truth returns. %for easier comparison and because optimal policies are invariant to reward function scaling by a positive scalar, and component-wise translation by a constant \cite{amin2016towards}.
The results for four of the tasks are shown in Figure~\ref{fig:atari_reward_extrapolation_partial}. The remaining plots are included in Appendix~\ref{app:extrapolate_heat}. The plots show relatively strong correlation between ground truth returns and predicted returns across most tasks, despite having no a priori access to information about true returns, nor rankings.
We also generated reward sensitivity heat maps \cite{greydanus2018visualizing} for the learned reward functions. These visualizations provide evidence that D-REX learns semantically meaningful features that are highly correlated with the ground truth reward. For example, on Seaquest, the reward function learns a shaped reward that gives a large penalty for an imminent collision with an enemy  (Appendix \ref{app:extrapolate_heat}).

\begin{figure}
    \centering
    \subfigure[Hopper]{
        \includegraphics[width=.24\linewidth]{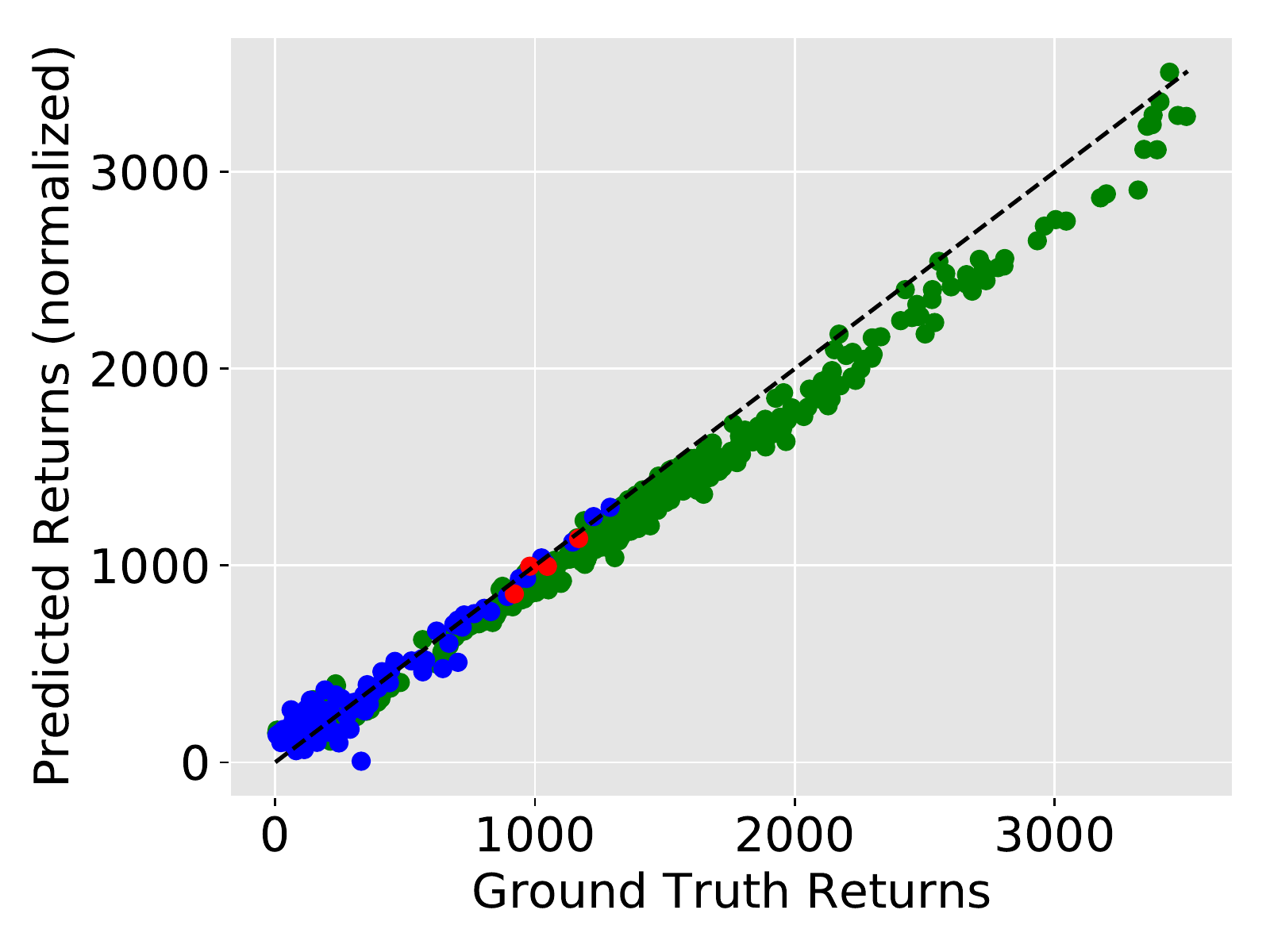}
        
    }
    \hspace{-0.3cm}
     \subfigure[Half-Cheetah]{
        \includegraphics[width=.24\linewidth]{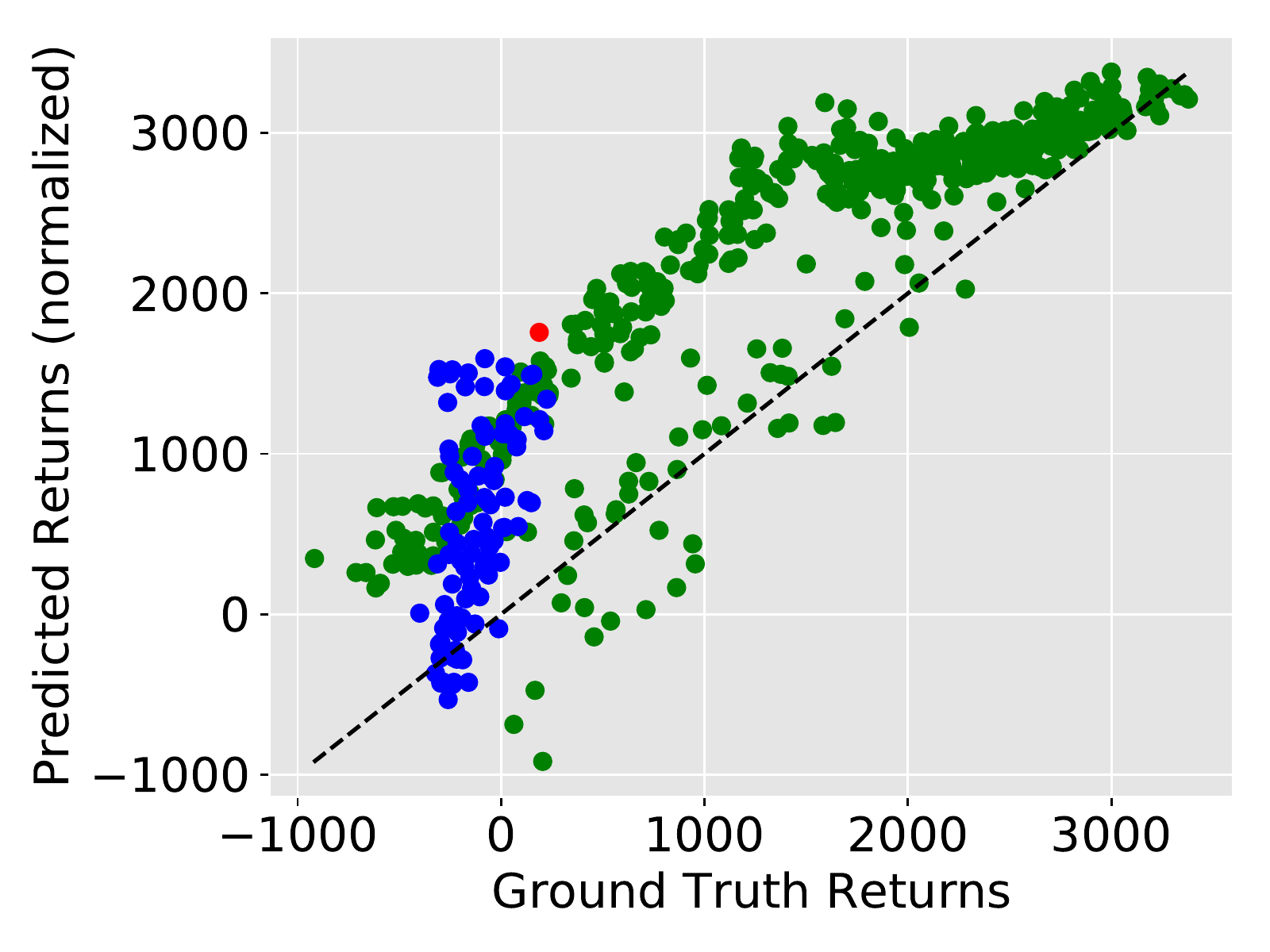}
        
    }
    \hspace{-0.3cm}
    \subfigure[Beam Rider]{
        \includegraphics[width=.24\linewidth]{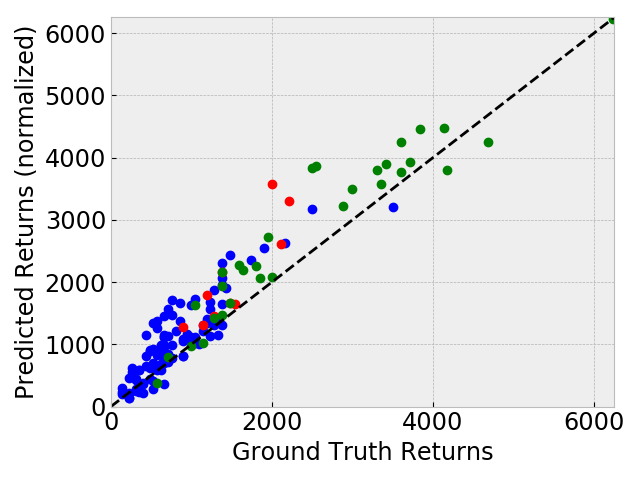}
        
    }
    \hspace{-0.3cm}
    \subfigure[Seaquest]{
        \includegraphics[width=.24\linewidth]{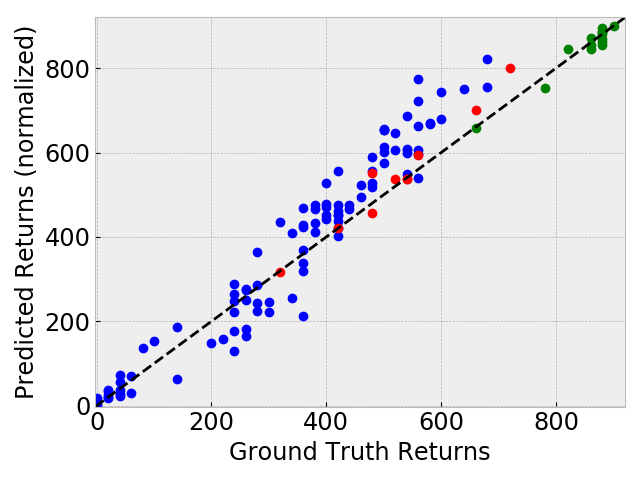}
    }
    \caption{Extrapolation plots for a selection of MuJoCo and Atari tasks (see the appendix for more plots). Blue dots represent synthetic demonstrations generated via behavioral cloning with different amounts of noise injection. Red dots represent actual demonstrations, and green dots represent additional trajectories not seen during training. We compare ground truth returns over demonstrations to the predicted returns from D-REX (normalized to be in the same range as the ground truth returns).}
    \label{fig:atari_reward_extrapolation_partial}
\end{figure}

\subsection{Extrapolating beyond the demonstrator's performance}

Lastly, we tested whether the reward functions learned using D-REX can be used in conjunction with deep reinforcement learning to achieve better-than-demonstrator performance. We ran PPO on the learned reward function $\hat{R}_\theta$ for 1 million timesteps (MuJoCo tasks) and 50 million frames (Atari games). We ran three replicates of PPO with different seeds and report the best performance on the ground-truth reward function, averaged over 20 trajectory rollouts. Table~\ref{tab:D-REX} compares the performance of the demonstrator with the performance of D-REX, behavioral cloning (BC), and Generative Adversarial Imitation Learning (GAIL) \cite{ho2016generative}, a state-of-the-art imitation learning algorithm. 

The results in Table~\ref{tab:D-REX} demonstrate that policies optimized using D-REX outperform the best demonstration in all tasks except for Pong. Furthermore, D-REX is also able to outperform BC and GAIL across all tasks except for Hopper and Pong. 
On the simulated MuJoCo robotics tasks, D-REX results in a 77\% (Hopper) and 418\% (HalfCheetah) performance increase when compared with the best demonstration.
On Q*Bert, D-REX exploits a known loophole in the game which allows nearly infinite points. Excluding Q*Bert, D-REX results in an average performance increase of 39\% across the Atari tasks, when compared with the best demonstration. 
To test the robustness of the policy learned via D-REX, we also considered the worst-case performance, something that is important for safe inverse reinforcement learning \cite{brown2018efficient,brown2018risk,majumdar2017risk}. As shown in Appendix~\ref{app:risk}, D-REX achieves better worst-case performance than either the demonstrator or standard imitation learning algorithms. %Finally, we noticed that for many Atari games, the goal is to stay alive as long as possible. 
To ensure that D-REX is learning more than a simple bonus for staying alive, we also compared D-REX with a PPO agent trained with a +1 reward for every timestep. Our results in Appendix~\ref{app:livelong} demonstrate that D-REX is superior to a simple +1 reward across all games, except for Pong. %This provides further evidence that the reward function learned by D-REX is a function of relevant state features.

\begin{table*}
\caption{Comparison of the performance of D-REX with behavioral cloning (BC), GAIL \cite{ho2016generative}, and the demonstrator's performance. Results are the best average ground-truth returns over 3 random seeds with 20 trials per seed. Bold denotes performance that is better than the best demonstration.}
\label{tab:D-REX}
\vskip 0.15in
\begin{center}
\begin{small}
\begin{tabular}{ccccccccc}
\toprule
 & \multicolumn{2}{c}{Demonstrations} & \multicolumn{2}{c}{D-REX} &\multicolumn{2}{c}{BC} & \multicolumn{2}{c}{GAIL}\\
  \midrule
Task &  Avg. & Best & Average & Stdev. & Avg. & Stdev. & Avg. & Stdev.\\ 
\midrule 
%Hopper (S1) & 3.33 & 3.70 & \textbf{5.52} & (0.12) & 2.15 & (0.71) & \textbf{10.72} & (3.33)\\ 
%Hopper (S2) & 5.69 & 6.44 & \textbf{16.17} & (7.01) & 4.97 & (1.28) & \textbf{14.5}0 & (4.02)\\ 
%Hopper & 5.7 & 6.4 & \textbf{16.2} & (7.0) & 5.0 & (1.3) & 14.5 & (4.0)\\
Hopper & 1029.1 & 1167.9 & \textbf{2072.0} & (1574.2) & 943.8 & (208.4) & \textbf{2700.2} & (692.3)\\ 
%HalfCheetah (S1) & - & 8.61 & \textbf{39.13} &(1.71) & 6.64 & (9.23) & 7.06 & (1.38)\\ 
%HalfCheetah (S2) & - & 26.01 & \textbf{65.21} &(4.14) & 11.08 & (8.22) & 25.69 & (5.48)\\ 
%HalfCheetah & 26.0 & 26.0 & \textbf{65.2} &(4.1) & 11.1 & (8.2) & 25.7 & (5.5)\\ 
HalfCheetah & 187.7 & 187.7 & \textbf{972.9} &(96.1) & -115.9 & (179.8) & 85.2 & (86.0)\\ 
 %\midrule
%Game &  Average & Best & Average & Stdev. & Average & Stdev. & Average & Stdev.\\ 
\midrule 
Beam Rider & 1,524.0 & 2,216.0 & \textbf{7,220.0} &	(2221.9) & 1,268.6 & (776.6) & 1778.0	& (787.1)\\ 
Breakout & 34.5 & 59.0 & \textbf{94.7} & (16.5) & 29.75 & (10.1) & 0.3	 & (0.4)
 \\ 
Enduro & 85.5 & 134.0 & \textbf{247.9} &(88.4) & 83.4 & (27.0) & 62.4 & 	(24.0)
\\ 
Pong & 3.7 & \textbf{14.0} & -9.5 &	(9.8) & 8.6 & (9.5) & -3.4 &	(3.8)
 \\ 
Q*bert & 770.0 & 850.0 & \textbf{22543.8}	& (7434.1) & 1,013.75 & (721.1) & 737.5	& (311.4)
\\ 
Seaquest & 524.0 & 720.0 & \textbf{801.0} & (4.4) & 530.0 & (109.8) & 554.0	& (108.8)
 \\ 
Space Invaders & 538.5 & 930.0 & \textbf{1,122.5} &	(501.2) & 426.5 & (187.1) & 364.8 &	(139.7)
\\ 
\bottomrule
\end{tabular}
\end{small}
\end{center}
\end{table*}

\section{Conclusion}
Imitation learning approaches are typically unable to outperform the demonstrator. This is because most approaches either directly mimic the demonstrator or find a reward function that makes the demonstrator appear near optimal. While algorithms that can exceed the performance of a demonstrator exist, they either rely on a significant number of active queries from a human \cite{christiano2017deep,sadigh2017active,palan2019learning}, a hand-crafted reward function \cite{pastor2011skill,hester2018deep}, or pre-ranked demonstrations \cite{browngoo2019trex}. Furthermore, prior research has lacked theory about when better-than-demonstrator performance is possible. We first addressed this lack of theory by presenting a sufficient condition for extrapolating beyond the performance of a demonstrator. We also provided theoretical results demonstrating how preferences and rankings allow for better reward function learning by reducing the learner's uncertainty over the true reward function.

%Given this theoretical foundation, 
We next focused on making reward learning from rankings more applicable to a wider variety of imitation learning tasks where only unlabeled demonstrations are available. We presented a novel imitation learning algorithm, Disturbance-based Reward Extrapolation (D-REX) that automatically generates ranked demonstrations via noise injection and uses these demonstrations to seek to extrapolate beyond the performance of a suboptimal demonstrator. We empirically evaluated D-REX on a set of simulated robot locomotion and Atari tasks and found that D-REX outperforms state-of-the-art imitation learning techniques and also outperforms the best demonstration in 8 out of 9 tasks. These results  provide the first evidence that better-than-demonstrator imitation learning is possible without requiring extra information such as rewards, active supervision, or preference labels. Our results open the door to the application of a variety of ranking and preference-based learning techniques \cite{cao2007learning,chen2009ranking} to standard imitation learning domains where only unlabeled demonstrations are available.

%===============================================================================

% The maximum paper length is 8 pages excluding references and acknowledgements, and 10 pages including references and acknowledgements

%\clearpage
% The acknowledgments are automatically included only in the final version of the paper.
\acknowledgments{This  work  has  taken  place  in  the  Personal  Autonomous Robotics Lab (PeARL) at The University of Texas at Austin. PeARL  research  is  supported  in  part  by  the  NSF  (IIS-1724157, IIS-1638107, IIS-1617639, IIS-1749204) and ONR(N00014-18-2243).
}

%===============================================================================

% no \bibliographystyle is required, since the corl style is automatically used.
{
\small
\bibliography{roar_refs}  % .bib
}

%\clearpage
\appendix

\section{Theory and Proofs}
We consider the case where the reward function of the demonstrator is approximated by a linear combination of features $R(s)  = w^T \phi(s)$. Note that these can be arbitrarily complex features, such as the activations of the penultimate layer of a deep neural network. The expected return of a policy when evaluated on $R(s)$ is given by 
\begin{equation}
J(\pi|R) = \mathbb{E}_\pi\bigg[\sum_{t=0}^\infty \gamma^t R(s_t)\bigg] = w^T\mathbb{E}_\pi\bigg[\sum_{t=0}^\infty \gamma^t \phi(s_t)\bigg] = w^T \Phi_\pi,
\end{equation}
where $\Phi_\pi$ are the expected discounted feature counts that result from following the policy $\pi$.

\setcounter{theorem}{0}
\setcounter{proposition}{0}
\setcounter{corollary}{0}
\begin{theorem}
If the estimated reward function is $\hat{R}(s) = w^T \phi(s)$ and the true reward function is $R^*(s) = \hat{R}(s) + \epsilon(s)$ for some error function $\epsilon : \mathcal{S} \rightarrow \mathbb{R}$ and $\|w\|_1 \leq 1$, then extrapolation beyond the demonstrator, i.e., $J(\hat{\pi}|R^*) > J(\mathcal{D}| R^*)$, is guaranteed if: 
\begin{equation}
J(\pi^*_{R^*}|R^*) - J(\mathcal{D}|R^*) > \epsilon_\Phi + \frac{2 \|\epsilon\|_\infty}{1 - \gamma}
\end{equation}
where $\pi^*_{R^*}$ is the optimal policy under $R^*$, $\epsilon_\Phi = \|\Phi_{\pi^*} - \Phi_{\hat{\pi}}\|_\infty$ and $\|\epsilon\|_\infty=\sup\left\{\,\left|\epsilon(s)\right|:s\in \mathcal{S}\,\right\}$.
\end{theorem}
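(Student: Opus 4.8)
The plan is to lower-bound the true-reward return of the learned policy $\hat{\pi}$ in terms of the true-reward return of the optimal policy $\pi^*_{R^*}$, and then simply invoke the hypothesis. The two error terms in the statement come from two cleanly separable sources: the feature-count discrepancy between $\hat{\pi}$ and $\pi^*_{R^*}$, which together with $\|w\|_1\leq 1$ accounts for $\epsilon_\Phi$; and the pointwise reward-approximation error $\epsilon$, which together with the discounting accounts for the $2\|\epsilon\|_\infty/(1-\gamma)$ term. The first thing I would establish is a decomposition valid for an \emph{arbitrary} policy $\pi$: substituting $R^* = \hat{R} + \epsilon$ and using linearity of expectation gives
\begin{equation}
J(\pi|R^*) = w^T\Phi_\pi + \mathbb{E}_\pi\bigg[\sum_{t=0}^\infty \gamma^t \epsilon(s_t)\bigg],
\end{equation}
and since $|\epsilon(s)|\leq\|\epsilon\|_\infty$ for all $s$ and $\gamma\in[0,1)$, the residual term has absolute value at most $\sum_{t=0}^\infty\gamma^t\|\epsilon\|_\infty = \|\epsilon\|_\infty/(1-\gamma)$. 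Write this residual as $e_\pi$, so that $J(\pi|R^*) = w^T\Phi_\pi + e_\pi$ with $|e_\pi|\leq \|\epsilon\|_\infty/(1-\gamma)$.

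Next I would apply this decomposition to $\pi = \hat{\pi}$ and to $\pi = \pi^*_{R^*}$ and subtract, obtaining $J(\hat{\pi}|R^*) = J(\pi^*_{R^*}|R^*) + w^T(\Phi_{\hat{\pi}} - \Phi_{\pi^*_{R^*}}) + (e_{\hat{\pi}} - e_{\pi^*_{R^*}})$. For the feature term, Hölder's inequality with $\|w\|_1\leq 1$ gives $w^T(\Phi_{\hat{\pi}} - \Phi_{\pi^*_{R^*}}) \geq -\|w\|_1\,\|\Phi_{\hat{\pi}} - \Phi_{\pi^*_{R^*}}\|_\infty \geq -\epsilon_\Phi$; for the residual term, the triangle inequality and the bound from the first step give $e_{\hat{\pi}} - e_{\pi^*_{R^*}} \geq -2\|\epsilon\|_\infty/(1-\gamma)$. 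Hence $J(\hat{\pi}|R^*) \geq J(\pi^*_{R^*}|R^*) - \epsilon_\Phi - 2\|\epsilon\|_\infty/(1-\gamma)$. Combining this with the hypothesis $J(\pi^*_{R^*}|R^*) - J(\mathcal{D}|R^*) > \epsilon_\Phi + 2\|\epsilon\|_\infty/(1-\gamma)$ yields $J(\hat{\pi}|R^*) > J(\mathcal{D}|R^*)$, which is the claim.

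There is no serious obstacle here; the proof is short once the decomposition is in place. The only points requiring care are the sign bookkeeping in the subtraction step (making sure the Hölder and triangle-inequality estimates are applied in the direction that produces a \emph{lower} bound on $J(\hat{\pi}|R^*)$) and the use of $\gamma<1$ so that the geometric series converges. It is worth noting explicitly that optimality of $\hat{\pi}$ under $\hat{R}$ is never used: any learned policy works, and its suboptimality enters the guarantee only through the feature-count gap $\epsilon_\Phi$. If $\hat{\pi}$ were in fact optimal for $\hat{R}$, then $w^T\Phi_{\hat{\pi}}\geq w^T\Phi_{\pi^*_{R^*}}$ and the $\epsilon_\Phi$ term could be dropped, so the stated bound is conservative but holds in full generality.
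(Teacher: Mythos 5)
Your proof is correct and follows essentially the same route as the paper's: decompose $J(\pi|R^*)$ into the linear feature term $w^T\Phi_\pi$ plus a residual bounded by $\|\epsilon\|_\infty/(1-\gamma)$, control the feature-count gap between $\hat{\pi}$ and $\pi^*_{R^*}$ via H\"older's inequality with $\|w\|_1\leq 1$, and conclude by comparing against the hypothesis on the demonstrator's optimality gap. The only cosmetic difference is that you phrase the estimate as a lower bound on $J(\hat{\pi}|R^*)$ rather than an upper bound on $J(\pi^*_{R^*}|R^*)-J(\hat{\pi}|R^*)$, and your closing remark that optimality of $\hat{\pi}$ under $\hat{R}$ is never used (so the bound is conservative) is accurate.
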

\begin{proof}
In order for extrapolation to be possible, the demonstrator must perform worse than $\hat{\pi}$, the policy learned via IRL, when evaluated under the true reward function. We define $\delta = J(\pi^*_{R^*} | R^*) - J(\mathcal{D}|R^*)$ as the optimality gap between the demonstrator and the optimal policy under the true reward function. We want to ensure that $J(\pi^* | R^*) - J(\hat{\pi}|R^*) < \delta$. We have 
\begin{eqnarray}
  J(\pi^*_{R^*} | R^*) - J(\hat{\pi}|R^*) &=& \bigg| \mathbb{E}_{\pi^*} \big[\sum_{t=0}^\infty \gamma^t R^*(s_t) \big] - \mathbb{E}_{\hat{\pi}} \big[\sum_{t=0}^\infty \gamma^t R^*(s_t) \big] \bigg| \\
 &=& \bigg| \mathbb{E}_{\pi^*} \big[\sum_{t=0}^\infty \gamma^t (w^T \phi(s_t) + \epsilon(s_t)) \big] - \mathbb{E}_{\hat{\pi}} \big[\sum_{t=0}^\infty \gamma^t (w^T \phi(s_t) + \epsilon(s_t)) \big] \bigg| \\
 &=& \bigg| w^T\mathbb{E}_{\pi^*} \big[\sum_{t=0}^\infty \gamma^t \phi(s_t)\big] + \mathbb{E}_{\pi^*} \big[ \sum_{t=0}^\infty \gamma^t \epsilon(s_t) \big] - w^T\mathbb{E}_{\hat{\pi}} \big[\sum_{t=0}^\infty \gamma^t \phi(s_t) \big] - \mathbb{E}_{\hat{\pi}} \big[\sum_{t=0}^\infty \gamma^t \epsilon(s_t) \big] \bigg| \nonumber\\ \\
  &=& \bigg| w^T \Phi_{\pi^*} + \mathbb{E}_{\pi^*} \big[\sum_{t=0}^\infty \gamma^t \epsilon(s_t) \big] - w^T \Phi_{\hat{\pi}} - \mathbb{E}_{\hat{\pi}} \big[\sum_{t=0}^\infty  \gamma^t \epsilon(s_t) \big] \bigg| \\
  &=& \bigg| w^T (\Phi_{\pi^*} - \Phi_{\hat{\pi}}) + \mathbb{E}_{\pi^*} \big[\sum_{t=0}^\infty \gamma^t \epsilon(s_t) \big]  - \mathbb{E}_{\hat{\pi}} \big[\sum_{t=0}^\infty \gamma^t \epsilon(s_t) \big] \bigg| \\
  &\leq& \bigg| w^T (\Phi_{\pi^*} - \Phi_{\hat{\pi}}) + \big[\sum_{t=0}^\infty \gamma^t \sup_{s \in \mathcal{S}} \epsilon(s) \big]  -  \big[\sum_{t=0}^\infty \gamma^t \inf_{s\in \mathcal{S}}  \epsilon(s) \big] \bigg| \\
  &=& \bigg| w^T (\Phi_{\pi^*} - \Phi_{\hat{\pi}}) + \big(\sup_{s \in \mathcal{S}} \epsilon(s) -  \inf_{s\in \mathcal{S}} \epsilon(s)\big) \sum_{t=0}^\infty \gamma^t  \bigg| \\
    &\leq& \bigg| w^T (\Phi_{\pi^*} - \Phi_{\hat{\pi}}) + \frac{2 \|\epsilon \|_\infty}{1 - \gamma} \bigg| \\
    &\leq& | w^T (\Phi_{\pi^*} - \Phi_{\hat{\pi}})| + \bigg|\frac{2 \|\epsilon \|_\infty}{1 - \gamma} \bigg| \\
&\leq&  \|w\|_1 \|\Phi_{\pi^*} - \Phi_{\hat{\pi}}\|_\infty + \frac{2 \|\epsilon \|_\infty}{1 - \gamma} \label{line:holders}\\
&\leq&  \epsilon_\Phi + \frac{2 \|\epsilon \|_\infty}{1 - \gamma} 
\end{eqnarray}
where $\Phi_\pi = \mathbb{E}_\pi [\sum_{t=0}^\infty \gamma^t \phi(s_t)]$ and line (\ref{line:holders}) results from H{\"o}lder's inequality.
Thus, as long as $\delta > \epsilon_\Phi - \frac{2 \|\epsilon\|_\infty}{1 - \gamma}$, then $J(\pi^*|R^*) - J(\hat{\pi}|R^*) < J(\pi^*|R^*) - J(\mathcal{D}|R^*)$ and thus, $J(\hat{\pi}|R^*) > J(\mathcal{D}|R^*)$.
\end{proof}

Theorem 1 makes the assumption that the learner and demonstrator operate in the same state-space. However, because Theorem 1 only involve differences in expected features and reward errors over state visitations, the transition dynamics or action spaces are not required to be the same between the demonstrator and the learner. While our experiments use demonstrator actions to perform behavioral cloning, future work could use an imitation learning from observation method \cite{torabi2018behavioral} to learn an initial cloned policy without requiring demonstrator actions. This would make D-REX robust to action differences between the demonstrator and learner and allow the learner to potentially learn a more efficient policy if its action space and transition dynamics are better suited to the task.

\subsection{Extrapolation via ranked demonstrations}
The previous results demonstrate that in order to extrapolate beyond a demonstrator, it is sufficient to have small reward approximation error. However, the following proposition, adapted with permission from \cite{pabloranked}, demonstrates that the reward functions inferred by an IRL or apprenticeship learning algorithm may be quite superficial and may not accurately represent some dimensions of the true reward function.

\begin{proposition}\label{prop:mdp_example}
There exist MDPs with true reward function $R^*$, expert policy $\pi_E$, approximate reward function $\hat{R}$, and non-expert policies $\pi_1$ and $\pi_2$, such that 
\begin{eqnarray}
&&\pi_E = \arg \max_{\pi \in \Pi} J(\pi |R^*) \;\text{  and  }\; J(\pi_1 | R^*) \ll J(\pi_2 | R^*) \label{eqn:mdpexample1_apx} \\
&&\pi_E = \arg \max_{\pi \in \Pi} J(\pi | \hat{R}) \;\text{  and  }\; J(\pi_1 | \hat{R}) = J(\pi_2 | \hat{R}) \label{eqn:mdpexample2_apx}. 
\end{eqnarray}
However, enforcing a preference ranking over trajectories, $\tau^* \succ \tau_2 \succ \tau_1$, where $\tau^* \sim \pi^*$, $\tau_2 \sim \pi_2$, and $\tau_1 \sim \pi_1$, results in a learned reward function $\hat{R}$, such that 
\begin{equation}
\pi_E = \arg \max_{\pi \in \Pi} J(\pi |\hat{R}) \;\text{  and  }\; J(\pi_1 | \hat{R}) <  J(\pi_2 | \hat{R}) \label{eqn:mdpexample3_apx}.
\end{equation}

\end{proposition}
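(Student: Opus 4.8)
The statement is purely existential, so the plan is to exhibit one small deterministic MDP and verify the three displayed lines directly; beyond that we only need the standard observation that feature matching (and max-margin or maximum-entropy IRL) leaves the reward completely unconstrained off the expert's state-visitation support. Concretely, I would take a start state $s_0$ with three actions leading deterministically to three absorbing states $s_E$ (the ``goal''), $s_2$ (a neutral state), and $s_1$ (a ``trap''), with indicator features $\phi(s_E) = e_1$, $\phi(s_2) = e_2$, $\phi(s_1) = e_3$ and $\phi(s_0) = \mathbf{0}$. The action chosen at $s_0$ then determines everything, so effectively the three deterministic policies are $\pi_E, \pi_2, \pi_1$, sending $s_0$ to $s_E, s_2, s_1$ respectively, with sampled trajectories $\tau^*, \tau_2, \tau_1$; and for any state reward $R$ the trajectory return equals the policy value, $J(\pi_i|R) = \sum_{t \ge 1} \gamma^t R(s_i) = \frac{\gamma}{1-\gamma} R(s_i)$ (taking $R(s_0) = 0$).

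For the true reward I would set $R^*(s_E) = 1$, $R^*(s_2) = 0$, $R^*(s_1) = -c$ for a large constant $c > 0$; then $\pi_E$ is the unique maximizer of $J(\cdot|R^*)$ and $J(\pi_1|R^*) = -\frac{c\gamma}{1-\gamma} \ll 0 = J(\pi_2|R^*)$, which is (\ref{eqn:mdpexample1_apx}). Because the expert occupies only $s_0$ and $s_E$, a feature-matching / IRL solution is free on the weights of $e_2$ and $e_3$; choosing $\hat{R}$ with $\hat{R}(s_E) = 1$ and $\hat{R}(s_1) = \hat{R}(s_2) = 0$ (equivalently $w = e_1$) still makes $\pi_E$ the unique maximizer of $J(\cdot|\hat{R})$ and gives $J(\pi_1|\hat{R}) = J(\pi_2|\hat{R}) = 0$, which is (\ref{eqn:mdpexample2_apx}). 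I would also remark that this $\hat{R}$ agrees with $R^*$ on every state the expert visits, so it is indistinguishable from $R^*$ given only the demonstrations — precisely the sense in which the inferred reward is ``superficial.''

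For the ranked case I would argue that \emph{any} reward consistent with the return ordering $\tau^* \succ \tau_2 \succ \tau_1$ must satisfy $\hat{R}(s_E) > \hat{R}(s_2) > \hat{R}(s_1)$, since each trajectory's return is $\frac{\gamma}{1-\gamma}$ times the reward of its absorbing state. That single ordering simultaneously (i) forces $\pi_E$ to be the unique maximizer of $J(\cdot|\hat{R})$ and (ii) gives $J(\pi_1|\hat{R}) = \frac{\gamma}{1-\gamma}\hat{R}(s_1) < \frac{\gamma}{1-\gamma}\hat{R}(s_2) = J(\pi_2|\hat{R})$, which is (\ref{eqn:mdpexample3_apx}). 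For concreteness I would exhibit a witness such as $w = (\frac{1}{2}, 0, -\frac{1}{2})$, so that $\hat{R}$ is still linear in $\phi$ and $\|w\|_1 \le 1$.

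The one genuine subtlety, and the step I would be most careful to state cleanly, is the bridge from a ranking \emph{over trajectories} to an ordering of \emph{policy values}: it uses that the preference learner fits cumulative (discounted) return and that, because the MDP and all three policies are deterministic, each sampled $\tau_i$ realizes exactly the value $J(\pi_i|\hat{R})$ with no sampling slack (for stochastic dynamics the same argument survives in expectation provided the feature counts of $\tau_i$ agree with $\Phi_{\pi_i}$). A secondary point worth making explicit is why the flawed $\hat{R}$ of (\ref{eqn:mdpexample2_apx}) is a legitimate IRL/apprenticeship output rather than an arbitrary reward: feature-expectation matching constrains the reward only on the expert's visitation support, so the weights on $e_2$ and $e_3$ are truly unidentified and may be set equal without disturbing optimality of $\pi_E$.
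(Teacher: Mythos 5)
Your proposal is correct and follows essentially the same construction as the paper: a single-decision, three-branch deterministic MDP with a good, a neutral, and a strongly negative terminal state, a degenerate $\hat{R}$ that conflates the latter two, and a trajectory ranking that forces the learned reward to separate them. Your write-up is in fact somewhat more careful than the paper's (explicit features and weight vectors, and a clean statement of why the trajectory ranking transfers to an ordering of policy values in the deterministic case), but the underlying idea is identical.
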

\begin{proof}
Consider the MDP shown below. 
\begin{figure}[h]
\centering
\includegraphics[scale=0.25]{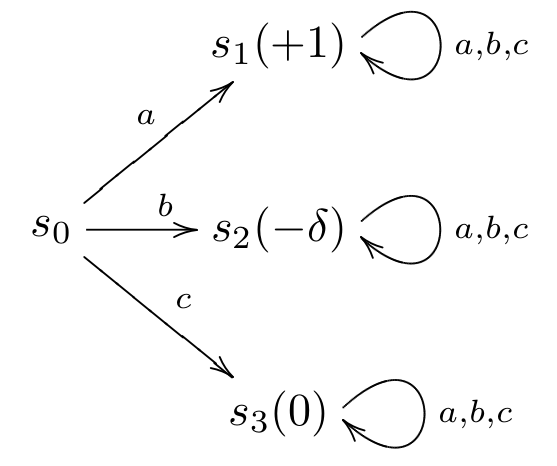}
%\caption{MDP where an optimal policy gives no information about the rewards of alternatives.}
\label{fig:mdp_proof_by_picture}
\end{figure}
There are three actions $a$, $b$, $c$, with deterministic transitions. Each transition is labeled by the action name. The true reward received upon entering a state is indicated in parenthesis, and $\delta \gg 0$ is some arbitrary constant. Clearly, $\pi_E(s_0) = a$. Setting $\hat{R}(s_1) = 1$, $\hat{R}(s_2) = \hat{R}(s_3) = 0$, $\pi_1(s_0) = b$, and $\pi_2(s_0) = c$ provides the existence proof for Equations (\ref{eqn:mdpexample1_apx}) and (\ref{eqn:mdpexample2_apx}).

Enforcing the preference constraints $\tau^* \succ \tau_1 \succ \tau_2$ for $\tau^* = (s_0, a, s_1)$, $\tau_2 = (s_0,b,s_2)$, $\tau_1 = (s_0,c,s_3)$, results in a learned reward function $\hat{R}$ such that  $J(\tau^*|\hat{R}) > J(\pi_2 | \hat{R}) \succ J(\pi_1|\hat{R})$ which finishes the proof.
\end{proof}

Proposition~\ref{prop:mdp_example} gives a simple example of when learning from an expert demonstration reveals little about the underlying reward structure of the MDP. While it is true that solving the MDP in the above example only requires knowing that state $s_1$ is preferable to all other states, this will likely lead to an agent assuming that both $s_2$ and $s_3$ are equally undesirable. 

This may be problematic for several reasons. The first problem is that learning a reward function from demonstrations is typically used as a way to generalize to new situations---if there is a change in the initial state or transition dynamics, an agent can still determine what actions it should take by transferring the learned reward function. However, if the learned reward function is drastically different than the true reward, this can lead to poor generalization, as would be the case if the dynamics in the above problem change and action $a$ now causes a transition to state $s_2$. Another problem is that most learning from demonstration applications focus on providing \textit{non-experts} the ability to program by example. Thus, the standard IRL approach of finding a reward function that maximizes the likelihood of the demonstrations may lead to reward functions that overfit to demonstrations and may be oblivious to important differences in rewards.

A natural way to alleviate these problems is via ranked demonstrations.
Consider the problem of learning from a sequence of $m$ demonstrated trajectories $\tau_1, \ldots, \tau_m$, ranked according to preference such that $\tau_1 \prec \tau_2, \ldots \prec \tau_m$. Using a set of strictly ranked demonstrations avoids the degenerate all-zero reward. Furthermore, ranked demonstrations provide explicit information about both what to do as well as what not to do in an environment.

\subsection{Ranking Theory} \label{app:fullranking}
IRL is an ill-posed problem due to reward ambiguity---given any policy, there are an infinite number of reward functions that make this policy optimal \cite{ng1999policy}. However, it is possible to analyze the amount of reward ambiguity. In particular, if the reward is represented by  a linear combination of weights, then the feasible region of all reward functions that make a policy optimal can be defined as an intersection of half-planes \cite{brown2019machine}: 
% \begin{equation}
% H_\pi = \bigcap_{\substack{(s,a) \in \pi \\ b \in \mathcal{A}}} w^T (\Phi_\pi(s,a) - \Phi_\pi(s,b)) \geq 0, 
% \end{equation}
\begin{equation}
H_\pi = \bigcap_{\pi' \in \Pi} w^T (\Phi_\pi - \Phi_{\pi'}) \geq 0, 
\end{equation}

We define the \textit{reward ambiguity}, $G(H_{\pi})$, as the volume of this intersection of half-planes: 
\begin{equation}
G(H_{\pi}) = \text{Volume}(H_\pi),
\end{equation}
where we assume without loss of generality that $\|w\| \leq 1$, to ensure this volume is bounded.

We now prove that a total ranking over policies results in no more reward ambiguity than simply using the optimal policy.
\begin{proposition}
Given a policy class $\Pi$, an optimal policy $\pi^* \in \Pi$ and a total ranking over $\Pi$, and a reward function $R(s) = w^T \phi(s)$,  the reward ambiguity resulting from $\pi^*$ is greater than or equal to the reward ambiguity of using a total ranking, i.e., $G(H_\pi^*) \geq G(H_{\rm ranked})$. 
\end{proposition}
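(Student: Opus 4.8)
The plan is to prove the stronger set-theoretic statement that $H_{\rm ranked} \subseteq H_{\pi^*}$, from which the volume inequality follows at once by monotonicity of volume. First I would write the total ranking explicitly as $\pi_1 \prec \pi_2 \prec \cdots \prec \pi_n$ with $\pi_n = \pi^*$; since $\pi^*$ is optimal it must sit at the top of any total ranking consistent with the reward, so this labeling is without loss of generality. The feasible reward region induced by the ranking is the intersection of all the pairwise half-plane constraints it imposes,
\begin{equation}
H_{\rm ranked} = \bigcap_{1 \le i < j \le n} \left\{ w : \|w\| \le 1,\ w^T(\Phi_{\pi_j} - \Phi_{\pi_i}) \ge 0 \right\}.
\end{equation}

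Next I would observe that the half-planes defining $H_{\pi^*}$ are precisely the subfamily of the ranking constraints in which the higher-ranked policy is $\pi^* = \pi_n$; that is,
\begin{equation}
H_{\pi^*} = \bigcap_{\pi' \in \Pi} \left\{ w : \|w\| \le 1,\ w^T(\Phi_{\pi^*} - \Phi_{\pi'}) \ge 0 \right\} = \bigcap_{1 \le i < n} \left\{ w : \|w\| \le 1,\ w^T(\Phi_{\pi_n} - \Phi_{\pi_i}) \ge 0 \right\}.
\end{equation}
Hence $H_{\rm ranked}$ is obtained from $H_{\pi^*}$ simply by intersecting with the additional half-planes indexed by pairs $i < j < n$, so $H_{\rm ranked} \subseteq H_{\pi^*}$. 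Finally, because $\text{Volume}(\cdot)$ is monotone with respect to set inclusion, this gives $G(H_{\rm ranked}) = \text{Volume}(H_{\rm ranked}) \le \text{Volume}(H_{\pi^*}) = G(H_{\pi^*})$, which is the claim.

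The step I would watch most carefully is the bookkeeping in the second paragraph: one must confirm that $\pi^*$ really does occupy the top slot of the total ranking, so that the $\pi^*$-optimality half-planes appear \emph{verbatim} among the ranking half-planes rather than merely being implied by them; once that identification is secured, the rest is just the observation that adding constraints can only shrink the feasible set. A minor secondary point is to keep the normalization $\|w\| \le 1$ attached to both regions so that both volumes are finite and the comparison is well posed; if the ranking happens to be infeasible then $H_{\rm ranked} = \emptyset$ and the inequality holds trivially.
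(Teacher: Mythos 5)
Your proof is correct and follows essentially the same route as the paper's: decompose $H_{\rm ranked}$ as $H_{\pi^*}$ intersected with the additional pairwise constraints not involving $\pi^*$, conclude $H_{\rm ranked} \subseteq H_{\pi^*}$, and apply monotonicity of volume. Your explicit remarks on why $\pi^*$ occupies the top of the ranking and on retaining the normalization $\|w\| \le 1$ are careful touches the paper leaves implicit, but the argument is the same.
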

\begin{proof}
Consider policies $\pi_1$ and $\pi_2$ where $J_{R^*}(\pi_1) \geq J_{R^*}(\pi_2)$. We can write this return inequality in terms of half-spaces as follows:
\begin{eqnarray}
&& J_{R^*}(\pi_1) \geq J_{R^*}(\pi_2) \\
&\iff& w^T \Phi_{\pi_1} \geq w^T \Phi_{\pi_2}\\
&\iff& w^T (\Phi_{\pi_1} - \Phi_{\pi_2}) \geq 0
\end{eqnarray}
defining a half-space over weight vectors.

Consider the optimal policy $\pi^*_{r^*}$. This policy induces a set of half-space constraints over all other possible policies $\pi \in \Pi$. Thus we have the following half-space constraints:
\begin{eqnarray}
H_{\pi^*} = \bigcap_{\pi \in \Pi} w^T (\Phi_{\pi^*} - \Phi_{\pi}) \geq 0
\end{eqnarray}

However, if we have a total ordering over $\Pi$, then we have the following intersection of half-spaces
\begin{eqnarray}
H_{\rm ranked}  &=& \bigcap_{\pi_i \succsim \pi_j \in \Pi } w^T (\Phi_{\pi_i} - \Phi_{\pi_j}) \geq 0  \\
&=& H_{\pi^*} \cap \bigg(\bigcap_{\substack{\pi_i \succsim \pi_j \in \Pi \\ \pi_i \neq \pi^*}} w^T (\Phi_{\pi_i} - \Phi_{\pi_j}) \geq 0 \bigg).
\end{eqnarray}

Thus, $H_{\rm ranked} \subseteq H_{\pi^*}$, and the volume of the set of feasible reward functions induced by the total ranking is therefore less than or equal to the volume of $H_{\pi^*}$, i.e., $G(H_{\rm ranked}) \leq G(H_{\pi^*})$.

\end{proof}

\subsection{Uncertainty Reduction for Random Halfspaces} \label{app:halfspace}
In this section we seek to bound how many reward hypotheses are removed after a certain number of ranked trajectories, $\tau_i \prec \tau_j$. Our main result that we will prove is that sampling random half-space constraints causes the reward ambiguity to decrease exponentially. 

We assume that the true reward function, $R^*(s) = {w^*}^T \phi(s)$, is a linear combination of features $w \in \mathbb{R}^n$. Let $\mathcal{H}$ denote the set of all reward hypotheses. We assume a linear combination of features $R(s) = w^T \phi(s)$, so $\mathcal{H} = \mathbb{R}^n$. We also make the common assumption that $\|w\|_1 \leq 1$ so that $\mathcal{H}$ is bounded \cite{abbeel2004apprenticeship,brown2018efficient}. 

Each pair of trajectories, $\tau_i \prec \tau_j$, that are ranked based on $R^*$, forms a half-space constraint over the true reward feature weights ${w^*}^T$:
\begin{eqnarray}
&\tau_i \prec \tau_j \\
\Rightarrow& J(\tau_i | R^*) < J(\tau_j | R^*)\\ \Rightarrow& {w^*}^T \Phi_{\tau_i} < {w^*}^T \Phi_{\tau_j} \\
\Rightarrow& {w^*}^T (\Phi_{\tau_j} - \Phi_{\tau_i}) > 0,
\end{eqnarray}
where $\Phi_\tau = \sum_{t =0}^T \gamma^t \phi(s_t)$ for $\tau = (s_0, a_0, s_1, a_1, \ldots, s_T, a_T)$.

We define $\Xi$ to be the set of all trajectories in the MDP. Let $\mathcal{X}$ be the set of half-space constraints that result from all ground-truth rankings over all possible trajectory pairs from $\Xi$, i.e., 
\begin{equation}
\mathcal{X} = \big\{ (\Phi_{\tau_j} - \Phi_{\tau_i}) : J(\tau_j|R^*) > J(\tau_j | R^*), \tau_i, \tau_j \in \Xi \big\}.
\end{equation}

We define the space of all reward feature weight vectors that are consistent with $\mathcal{X}$ as $\mathcal{R}$, where 
\begin{equation}
\mathcal{R} = \bigg\{ w : w \in \bigcap_{x \in \mathcal{X}} w^T x > 0 \bigg\},
\end{equation}
where $x$ is one of the half-space normal vectors from the set $\mathcal{X}$.
We also define $\mathcal{J} = \mathcal{H} \setminus \mathcal{R}$, i.e., the space that gets cut by all the half-spaces defined by the normal vectors in $\mathcal{X}$. Our goal is to find a lower bound, in expectation, on how much volume is cut from $\mathcal{J}$ for $k$ half-space constraints. 

To simplify our proofs and notation, we assume there are only a finite number of reward weight hypotheses in $\mathcal{J}$. Denote $j$ as the $j$th reward hypothesis in $\mathcal{J}$ and let $J = |\mathcal{J}|$ be the total number of these hypotheses. For each halfspace $x \in \mathcal{X}$, define $Z_i(x)$ as the binary indicator of whether halfspace $x$ (assumed to be represented by its normal vector) eliminates the hypothesis $j \in \mathbb{R}^n$, i.e.
\begin{equation}
Z_j(x) = 
\begin{cases}
1 & \text{ if } j^T x \leq 0 \\
0 & \text{otherwise}.
\end{cases}
\end{equation}

We define 
\begin{equation}
\mathcal{T} = \{j^{(\sum_{x \in \mathcal{X}} Z_j(x))} :  j \in \mathcal{J} \}
\end{equation}
 to be the multiset containing all the hypotheses in $\mathcal{J}$ that are eliminated by the half-space constraints in $\mathcal{X}$. This is a multiset since we include repeats of $j$ that are eliminated by different half-spaces in $\mathcal{X}$. We use the notation $\{ a^{(y)} \}$ to denote the multiset with $y$ copies of $a$. We define $T = |\mathcal{T}|$ to be the sum of the total number of reward hypotheses that are eliminated by each half-space in $\mathcal{X}$. Similarly, we define $H = |\mathcal{H}|$ and $X = |\mathcal{X}|$ to be the number of hypothesis in $\mathcal{H}$ and the number of half-spaces in $\mathcal{X}$, respectively.

Our goal is to find a lower bound on the expected number of hypotheses that are removed given $k$ half-space constraints that result from $k$ pairwise trajectory preferences.

Note that we can define $T$ as 
\begin{equation}\label{eqn:Tdef}
T =  \sum_{j \in J} \sum_{x \in X} Z_j(x).
\end{equation}
We can derive the expected number of unique hypotheses in $\mathcal{J}$ that are eliminated by the first half-space as follows:
\begin{eqnarray}
\mathbb{E} [ \sum_{j \in \mathcal{J}} Z_j(x) | x \sim \mathcal{U}(\mathcal{X})] &=& \sum_{j \in \mathcal{J}} \mathbb{E}_{x \in \mathcal{X}} [Z_j(x)] \\
&=& \sum_{j \in \mathcal{J}} \sum_{x \in X} p(x) Z_j(x) \\
&=& \sum_{j \in \mathcal{J}} \sum_{x \in \mathcal{X}} \frac{1}{X} Z_j(x) \\
&=& \frac{1}{X} \sum_{j \in \mathcal{J}} \sum_{x \in \mathcal{X}} Z_j(x) \\
&=& \frac{T}{X}
\end{eqnarray}
where $\mathcal{U}(\mathcal{X})$ represents a uniform distribution over $\mathcal{X}$.
There are $T$ hypotheses left to eliminate and $X$ halfspaces left to choose from. Once we choose all of them we have eliminated all of $\mathcal{T}$, so, in expectation, each half-space elminates $T/X$ of the hypotheses in $\mathcal{T}$.
%\textbf{but why do we need repeats? Why not say $J/X$?}
%\textbf{why do we do this?}
We can now consider $T$ to be the total number of hypotheses that remain to be eliminated after already enforcing some number of half-space constraints, and the same reasoning applies. We will take advantage of this later to form a recurrence relation over the elements left in $\mathcal{J}$, i.e. the reward function hypotheses that are false, but have not yet been eliminated by a pairwise trajectory ranking.
%I think to be more precise we would subscript X and J and T and let this iterate like Scott does later. 

When selecting a half-space from $\mathcal{X}$ uniformly, we eliminate $T/X$ unique hypotheses in $\mathcal{J}$ in expectation. If we assume that each $j \in \mathcal{J}$ is distributed uniformly across the half-spaces, then on average, there are $T/J$ copies of a hypothesis $j$ in $\mathcal{T}$. This can be shown formally as follows: 
\begin{eqnarray} \label{eqn:uniformJi}
\mathbb{E}_{j \in \mathcal{J}} [ \sum_{x \in \mathcal{X}} Z_j(x)] 
&=& \sum_{j \in \mathcal{J}} p(j) \bigg[ \sum_{x \in \mathcal{X}}  Z_j(x) \bigg] \\
 &=& \frac{1}{J} \sum_{j \in \mathcal{J}}\sum_{x \in \mathcal{X}}  Z_j(x) \\ 
 &=& \frac{T}{J}.
\end{eqnarray}

Thus, selecting a half-space uniformly at random eliminates $T/X$ unique hypotheses from $\mathcal{J}$; however, there are $T/J$ copies of each hypothesis on average, so a random half-space eliminates $\frac{T^2}{JX}$ total hypotheses from $\mathcal{T}$ on average. 

% \subsection{Worst-case analysis}
% The previous analysis is only true if we assume $j$ is distributed uniformly across half-spaces.
% If we take into account the sampling likelihood of $j$, i.e., the likelihood of cutting off $j$,   we have
% \begin{eqnarray}
% \mathbb{E}_{j \in \mathcal{J}} \bigg[\sum_{x \in \mathcal{X}} Z_j(x) \bigg] &=& \sum_{j \in \mathcal{J}} p(j) \bigg[ \sum_{x \in \mathcal{X}} Z_j(x) \bigg] \\
%  &=& \sum_{j \in \mathcal{J}} \frac{\sum_{x \in \mathcal{X}} Z_j(x)}{T} \bigg[ \sum_{x \in \mathcal{X}} Z_j(x) \bigg] \\
%  &=& \frac{1}{T} \sum_{j \in \mathcal{J}} \bigg[ \sum_{x \in \mathcal{X}} Z_j(x)\bigg]^2 \label{eqn:distJi}
% \end{eqnarray}
% This quantity is minimized when the squared terms are distributed evenly across $j$. Thus, for each hypothesis $j$, if we have an uniform number of repeats across $\mathcal{X}$, then Equation (\ref{eqn:distJi}) is minimized. Thus, in the worst-case, Equation (\ref{eqn:distJi}) is equivalent to Equation (\ref{eqn:uniformJi}). We will use Equation (\ref{eqn:uniformJi}) in the remainder of our analysis.

\paragraph{Recurrence relation}
The above analysis results in the following system of recurrence relations where $J_k$ is the number of unique hypotheses that have not yet been eliminated after $k$ half-spaces constraints, $T_k$ is the sum of the total hypotheses (including repeats) that have not yet been eliminated the first $k$ half-spaces, and $X$ is the total number of half-space constraints.
\begin{eqnarray}
&&J_0 = J \\
&&T_0 = T \\
&&J_{k+1} = J_k - \frac{T_k}{X-k} \\
&&T_{k+1} = T_k - \frac{T_k^2}{J_k \cdot (X -k)}
\end{eqnarray}

To simplify the analysis we make the following assumptions.
First we assume that $\mathcal{R}$ is small compared to the full space of $\mathcal{H}$. Thus, $\mathcal{H} \approx \mathcal{J}$. Since each $x \in \mathcal{X}$ covers $H/2$ hypotheses, then each $x \in \mathcal{X}$ covers approximately $J/2$ hypotheses. Summing over all halfspaces, we have that $T_0  \approx \sum_{x \in \mathcal{X}} J/ 2 = XJ / 2$. Furthermore, if $X$, the number of possible half-spaces, is large, then $X - k \approx X$.

Given these assumptions we now have the following simplified system of recurrence relations:
\begin{eqnarray}
&&J_0 = J \\
&&T_0 = \frac{J_0 X}{2}\\
&&J_{k+1} = J_k - \frac{T_k}{X}\\
&&T_{k+1} = T_k - \frac{T_k^2}{J_k X}
\end{eqnarray}
where $T_k$ and $J_k$ are the number of elements in $\mathcal{T}$ and $\mathcal{J}$ after $k$ half-spaces constraints have been applied.

The solution to these recurrences is:
\begin{eqnarray}
&&J_k = \frac{J_0}{2^k}, \\
&&T_k = \frac{J_0 X}{2^{k+1}}.
\end{eqnarray}
and can be verified by plugging the solution into the above equations and checking that it satisfies the base cases and recurrence relations.

%In D-rex we generate random half-space constraints by generating random trajectories. %Can we prove anything about random half-spaces? Unfortunately we'll never sample optimal trajs from D-rex unless BC works perfectly and the demonstrator is perfect. So we're sampling from a lot of different half-spaces, but probably not uniformly. 

%There will always be some amount of reward ambiguity in IRL, so we're mainly interested in reducing ambiguity over reward function hypotheses that are not consistent with $\mathcal{X}$. This means we want to bound the volume of $\mathcal{J}$.  
We can now bound the number of half-space constraints (the number of ranked random trajectory pairs) that are needed to get $J = |\mathcal{J}|$ down to some level $\epsilon$. 

\begin{theorem}
To reduce the volume of $\mathcal{J}$ such that $J_k = \epsilon$, then it suffices to have $k$ random half-space constraints, where
\begin{equation}
k = \log_2 \frac{J_0}{\epsilon}
\end{equation} 
\end{theorem}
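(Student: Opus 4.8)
The plan is to obtain the theorem as a one-line inversion of the closed-form solution established immediately above, namely $J_k = J_0 / 2^k$. First I would recall that this expression was already verified to satisfy both the base cases and the simplified recurrence relations, so no further work is needed to justify it; what remains is purely algebraic. Setting the surviving-hypothesis count equal to the target, $J_k = \epsilon$, gives $\epsilon = J_0 / 2^k$, hence $2^k = J_0/\epsilon$, and taking $\log_2$ of both sides yields $k = \log_2(J_0/\epsilon)$.

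Second, to upgrade the exact identity into the ``it suffices'' phrasing, I would observe that $k \mapsto J_k = J_0 \, 2^{-k}$ is strictly decreasing in $k$, so for every $k' \geq \log_2(J_0/\epsilon)$ we have $J_{k'} \leq \epsilon$. Thus any number of i.i.d.\ random half-space constraints at least as large as $\log_2(J_0/\epsilon)$ drives the count of false-but-not-yet-eliminated reward hypotheses down to $\epsilon$ or below. If an integer count is desired, I would simply replace $k$ by $\lceil \log_2(J_0/\epsilon) \rceil$.

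The only point requiring care — there is essentially no remaining mathematical content — is bookkeeping about the hypotheses inherited from the preceding derivation: the recurrence, and hence its solution, is an \emph{expected}-value statement resting on the simplifying approximations $\mathcal{H} \approx \mathcal{J}$, each half-space covering about $J/2$ hypotheses, $X - k \approx X$ for large $X$, and hypotheses distributed roughly uniformly across half-spaces. I would therefore state the result as a bound on the expected number of surviving hypotheses under exactly those assumptions, making clear it is not an almost-sure or worst-case guarantee. Beyond flagging that scope, the proof is just the substitution and the monotonicity remark above.
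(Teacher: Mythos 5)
Your proposal is correct and follows essentially the same route as the paper: invert the closed-form solution $J_k = J_0/2^k$ established from the simplified recurrences, set $J_k = \epsilon$, and solve for $k$. Your added remarks on monotonicity in $k$ and on the expected-value/approximation caveats inherited from the recurrence derivation are sensible refinements but do not change the argument.
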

\begin{proof}
As shown above, we have that 
\begin{equation}
J_k = \frac{J_0}{2^k}.
\end{equation}
Solving for $k$ and substituting $\epsilon$ for $J_k$ we get
\begin{equation}
k = \log_2 \frac{J_0}{\epsilon}
\end{equation} 
\end{proof}

This means that $J_k$ decreases exponentially with $k$, meaning we only need to sample a logarithmic number of half-space constraints to remove a large portion of the possible reward hypotheses.

\begin{corollary}
To reduce $J$ by $x$\% it suffices to have 
\begin{equation}
k = \log_2 \left(\frac{1}{1-x/100}\right)
\end{equation}
random half-space constraints.
\end{corollary}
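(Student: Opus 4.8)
The plan is to obtain this corollary as an immediate specialization of the preceding theorem, which guarantees that $k = \log_2(J_0/\epsilon)$ random half-space constraints suffice to bring $J_k$ down to $\epsilon$. The first step is to translate the informal phrase ``reduce $J$ by $x\%$'' into a concrete target: reducing the number of surviving (false, not-yet-eliminated) hypotheses by $x$ percent means driving the count from $J_0$ to $(1-x/100)\,J_0$, so the relevant value is $\epsilon = (1-x/100)\,J_0$.

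The second step is to substitute this $\epsilon$ into the bound from the previous theorem and simplify:
\begin{equation}
k = \log_2 \frac{J_0}{\epsilon} = \log_2 \frac{J_0}{(1-x/100)\,J_0} = \log_2\left(\frac{1}{1-x/100}\right),
\end{equation}
where the $J_0$ factors cancel since $J_0 > 0$. This is exactly the claimed expression. To justify the word ``suffices'' one also notes that the closed-form solution $J_k = J_0/2^k$ is strictly decreasing in $k$, so any number of constraints at least this large achieves at least the desired reduction.

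The only points that need a brief sanity check are the side conditions: we need $0 \le x < 100$ for the logarithm's argument to be defined and positive (as $x \to 100$ the bound diverges, consistent with the intuition that eliminating \emph{every} false hypothesis may require unboundedly many random constraints), and the finite-hypothesis assumption gives $J_0 \ge 1$ so the cancellation is legitimate. I do not expect any genuine obstacle here — the mathematical content is entirely carried by the recurrence solution $J_k = J_0/2^k$ established earlier, and the corollary is merely a change of variables that re-expresses that exponential decay in terms of a percentage reduction.
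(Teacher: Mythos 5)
Your proposal is correct and matches the paper's argument in substance: the paper sets $\tfrac{x}{100} = \tfrac{J_0 - J_0/2^k}{J_0} = 1 - \tfrac{1}{2^k}$ and solves for $k$, which is the same computation you perform by substituting $\epsilon = (1 - x/100)J_0$ into the theorem's bound. Your added remarks on monotonicity and the side condition $0 \le x < 100$ are harmless elaborations of the same route.
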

\begin{proof}
We want
\begin{equation}
\frac{x}{100} = \frac{J_0 - \frac{J_0}{2^k}}{J_0} = 1 - \frac{1}{2^k}
\end{equation}
Solving for $k$ results in 
\begin{equation}
k = \log_2 \left(\frac{1}{1-x/100}\right).
\end{equation}
\end{proof}

%Both of these results are nice and show that more half-spaces results in lower uncertainty over the true reward, but they don't really say anything about performance of the learned policy.

\section{Noise Degradation} \label{ref:noise_degredation}
The full set of noise degradation plots for all seven Atari games are shown in Figure~\ref{fig:full_atari_noise_degredation}. For MuJoCo experiments, we used 20 different noise levels evenly spaced over the interval $[0.0,1.0)$ and generated 5 trajectories for each level. For the Atari experiments, we used the noise schedule $\mathcal{E} = (0.01, 0.25, 0.5, 0.75, 1.0)$ and generated $K=20$ trajectories for each level.

For the MuJoCo tasks, we used the following epsilon greedy policy:
\begin{equation}\label{eqn:epsilon_greedy_mujoco}
    \pi_{\rm BC}(s_t | \epsilon) =  
    \begin{cases} 
      \pi_{\rm BC}(s_t),& \text{with probability } 1-\epsilon\\
      a_t \sim \mathcal{U}([-1,1]^n), & \text{with probability } \epsilon.
  \end{cases}
\end{equation}

For Atari, we used the following epsilon greedy policy:
\begin{equation}\label{eqn:epsilon_greedy_atari}
    \pi_{\rm BC}(a_t | s_t, \epsilon) =  
    \begin{cases} 
      1 - \epsilon + \frac{\epsilon}{|\mathcal{A}|},& \text{if } \pi_{\rm BC}(s_t)=a_t\\
      \frac{\epsilon}{|\mathcal{A}|}, & \text{otherwise}
  \end{cases}
\end{equation}
where $|\mathcal{A}|$ is the number of valid discrete actions in each environment.

\begin{figure}[h]
    \centering
    \subfigure[Beam Rider]{
        \includegraphics[width=.3\linewidth]{figs/beamrider_degredation_plot.png}
        
    }
     \subfigure[Breakout]{
        \includegraphics[width=.3\linewidth]{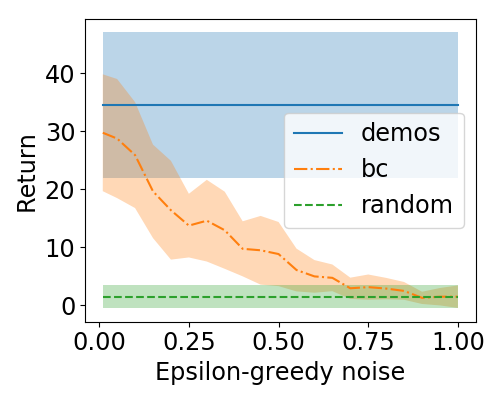}
        
    }
    \subfigure[Enduro]{
        \includegraphics[width=.3\linewidth]{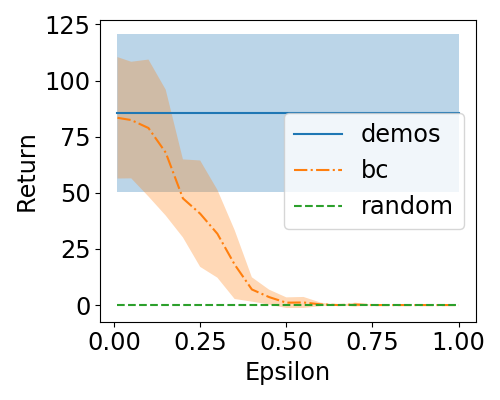}
        
    }
     \subfigure[Pong]{
        \includegraphics[width=.3\linewidth]{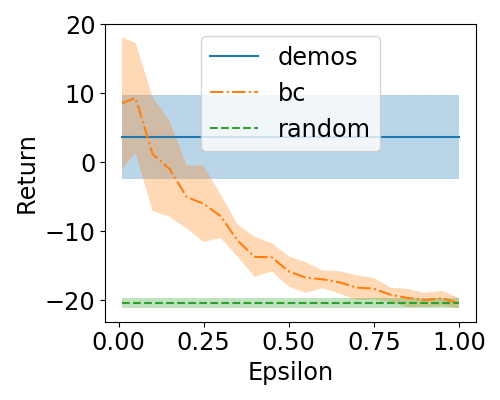}
        
    }
     \subfigure[Q*bert]{
        \includegraphics[width=.3\linewidth]{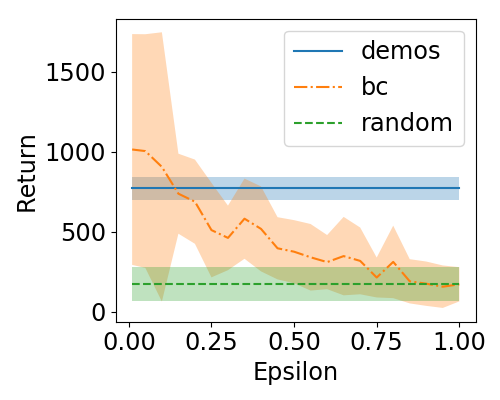}
        
    }
    \subfigure[Seaquest]{
        \includegraphics[width=.3\linewidth]{figs/seaquest_degredation_plot.png}
        
    }
    \subfigure[Space Invaders]{
        \includegraphics[width=.3\linewidth]{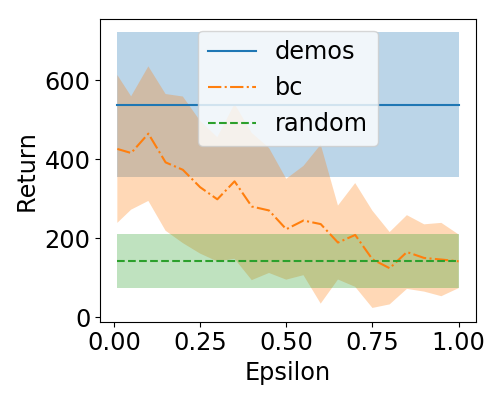}
        
    }
    \caption{The performance degradation of an imitation policy learned via behavioral cloning as the probability of taking a random action increases. Behavioral cloning is done on 10 demonstrations. Plots show mean and standard deviations over 20 rollouts per noise level.}
    \label{fig:full_atari_noise_degredation}
\end{figure}

\subsection{Noise Injection Theory} \label{app:degredation}
Let $\epsilon$ be the probability of taking a random action. We assume that $\mathcal{A}$ is finite, and assume a finite horizon MDP with a horizon of $T$. 

\paragraph{Optimal policy}
We start by analyzing what happens when we inject noise into an optimal policy, $\pi^*$. We will inject noise using the $\epsilon$-greedy policy defined in Equation (\ref{eqn:epsilon_greedy_atari}).
We define $p_{\epsilon}$ to be the probability of taking an suboptimal action. For simplicity, we assume that there is only one optimal action at each state and that the states visited are independent of each other. This simplifies the analysis, but ignores the problems of compounding error that occur in sequential decision making tasks \cite{ross2010efficient}. We will address the impact of compounding errors below. 

Given the above assumptions we have 
\begin{equation}
p_\epsilon = \frac{\epsilon(|A|-1)}{|A|},
\end{equation}
where $A$ is the set of actions in the MDP.

If we define $X$ to be the random variable representing the number of suboptimal actions in a trajectory of length $T$, then we can then analyze the number of suboptimal actions in the epsilon greedy policy $\pi^*(\cdot|\epsilon)$ using a binomial distribution with $T$ trials and probability $p = p_\epsilon$.

This gives us $\mathbb{E}[X] = T p_\epsilon $. If we assume that $|A|$ is large, then $p_\epsilon \approx \epsilon$ and we have $\mathbb{E}[X] = T \epsilon $. Thus, as $\epsilon$ goes from 0 to 1, the expected number of suboptimal actions in the policy interpolates between 0 and $T$. Furthermore, a standard Hoeffding bound shows that for large $T$, i.e., long trajectories, the empirical number of suboptimal actions will concentrate tightly around the mean $T\epsilon$. Thus, as $\epsilon$ goes from 0 to 1, with high probability the empirical number of suboptimal actions will interpolate between 0 and $T$.

\paragraph{Suboptimal cloned policy}
Suppose that instead of injecting noise into the optimal policy, we now inject noise into a policy, $\pi_{\rm BC}$, that is cloned from suboptimal demonstrations. Define $\beta$ to be the probability that $\pi_{\rm BC}$ takes the optimal action in a state. Note that here we also simplify the analysis by assuming that $\beta$ is a constant that does not depend on the current state. We define $p^{opt}_{\rm BC}$ to be the probability of taking a suboptimal action in state $s$ if $\pi_{\rm BC}$ would take the optimal action in that state. Similarly, we define $p^{sub}_{\rm BC}$ to be the probability of taking a suboptimal action in state $s$ if $\pi_{\rm BC}$ would take a suboptimal action in that state.

As before we define $p_\epsilon$ to be the probability that $\pi_{\rm BC} ( \cdot | \epsilon)$ takes a suboptimal action. Thus, we have that 
\begin{equation}
p_\epsilon = \beta p^{opt}_{\rm BC} + (1-\beta) p^{sub}_{\rm BC},
\end{equation}  
where 
\begin{equation}
p^{opt}_{\rm BC} = \frac{\epsilon(|A|-1)}{|A|}
\end{equation}
and 
\begin{equation}
p^{sub}_{\rm BC} = 1 - \epsilon + \frac{\epsilon(|A|-1)}{|A|}.
\end{equation}

Thus, we have 
\begin{eqnarray}
p_\epsilon &=& \beta \cdot \frac{\epsilon(|A|-1)}{|A|} + (1 - \beta) \cdot (1 - \epsilon + \frac{\epsilon(|A|-1)}{|A|})\\
&=& (1-\beta) (1 - \epsilon) +  \frac{\epsilon(|A|-1)}{|A|}.
\end{eqnarray}

As expected, when $\beta = 1$, then the first term is zero and we have the same result as for an optimal policy. Also, if $\epsilon = 1$, then all actions are random and we get the same result as for an optimal policy, since the optimality of the cloned policy does not affect the analysis if all actions are random. 

If we assume that $|A|$ is large, then we have $(|A| - 1)/|A| \approx 1$ and we have 
\begin{equation}
p_\epsilon = 1 - \beta (1-\epsilon).
\end{equation}
Thus, the expected number of suboptimal actions in the epsilon greedy cloned policy is 
\begin{equation}
\mathbb{E}[X] = T\cdot(1 - \beta(1-\epsilon)).
\end{equation}
With no noise injection ($\epsilon = 0$), we expect the cloned policy to make $(1-\beta)T$ suboptimal actions. Thus, for large enough $A$ and $T$, as $\epsilon$ goes from 0 to 1 the empirical number of suboptimal actions will, with high probability, interpolate between $(1-\beta)T$ and $T$.

\paragraph{Compounding errors}
The above analysis has ignored the compounding errors that are accumulated when running a behavioral cloned policy in a sequential decision making task. Using the same analysis as Ross et al. \cite{ross2010efficient} we get the following result that proves that the performance of the cloned policy with noise injection $\epsilon$ has an increasingly large performance gap from the expert as $\epsilon$ goes from 0 to 1.

\begin{corollary}
Given a cloned policy $\pi_{BC}$ from expert demonstrations, if the cloned policy makes a mistake with probability $1-\beta$, then 
\begin{equation}
J(\pi_{\rm BC}(\cdot | \epsilon)) \leq J(\pi^*) + T^2 (1 - \beta(1-\epsilon))
\end{equation}
where $T$ is the time-horizon of the MDP.
\end{corollary}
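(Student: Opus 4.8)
The plan is to reduce this to the standard compounding-error bound for behavioral cloning from Ross et al.~\cite{ross2010efficient}, instantiated with a per-step mistake probability equal to the quantity $p_\epsilon$ already computed in the ``Suboptimal cloned policy'' paragraph above. Concretely, I would show that the magnitude of the performance gap, $|J(\pi^*) - J(\pi_{\rm BC}(\cdot\mid\epsilon))|$, is at most $T^2 p_\epsilon$ with $p_\epsilon = 1 - \beta(1-\epsilon)$ (in the large-$|\mathcal{A}|$ regime), which immediately implies the stated one-sided inequality regardless of whether one reads $J$ as a return to be maximized or a cost to be minimized.

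First I would fix the normalization assumption implicit in the cited reduction: rewards lie in a bounded range (say $R(s)\in[0,1]$), so that any two trajectories of horizon $T$ differ in return by at most $T$. Next I would recall from the preceding analysis that, under the section's simplifying assumptions (finite $\mathcal{A}$, finite horizon $T$, and $\beta$ a state-independent constant), the noise-injected clone $\pi_{\rm BC}(\cdot\mid\epsilon)$ takes a suboptimal action at any visited state with probability exactly $p_\epsilon = (1-\beta)(1-\epsilon) + \epsilon(|A|-1)/|A| \approx 1 - \beta(1-\epsilon)$.

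The core step is the coupling argument of \cite{ross2010efficient}: run $\pi^*$ and $\pi_{\rm BC}(\cdot\mid\epsilon)$ on the same MDP and note that, conditioned on the two having agreed on every decision through step $t-1$, the state at step $t$ is distributed exactly as under $\pi^*$, so the conditional probability of a first disagreement at step $t$ is at most $p_\epsilon$. A union bound over the $T$ steps gives that $\pi_{\rm BC}(\cdot\mid\epsilon)$ deviates from $\pi^*$ at some point with probability at most $T p_\epsilon$; on the complementary event it collects exactly $J(\pi^*)$, and on the deviation event its return differs from $J(\pi^*)$ by at most $T$. Multiplying, $|J(\pi^*) - J(\pi_{\rm BC}(\cdot\mid\epsilon))| \le (T p_\epsilon)\cdot T = T^2\big(1-\beta(1-\epsilon)\big)$, which is the claim. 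I would finish by remarking on the monotonicity in $\epsilon$: the bound interpolates from $J(\pi^*) + (1-\beta)T^2$ at $\epsilon = 0$ (the usual behavioral-cloning bound) up to the trivial $J(\pi^*) + T^2$ at $\epsilon = 1$, formalizing the claimed increasing performance gap.

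The main obstacle is not the compounding-error computation, which is imported essentially verbatim from \cite{ross2010efficient}, but making the reduction's hypotheses match our setting: the conditioning step requires $p_\epsilon$ to be a valid per-step deviation probability under the state distribution encountered \emph{before} the first deviation, which is exactly why the earlier assumption that $\beta$ (hence $p_\epsilon$) is state-independent is doing real work. The union bound is also deliberately loose — it charges the full remaining horizon to any single mistake — trading a possibly tighter (e.g.\ linear-in-$T$) bound for a clean closed form, consistent with the simplifying assumptions already flagged in this section.
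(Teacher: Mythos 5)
Your proposal matches the paper's proof: both compute the per-step mistake probability $p_\epsilon = (1-\beta)(1-\epsilon) + \epsilon(|A|-1)/|A| \approx 1-\beta(1-\epsilon)$ for large $|\mathcal{A}|$ and then apply the compounding-error reduction of Theorem 2.1 in \cite{ross2010efficient}, which the paper simply cites while you unpack its coupling/union-bound argument. Your additional remarks on the bounded-reward normalization and on proving the two-sided gap (so the inequality holds whether $J$ is read as cost or return) are sensible clarifications but do not change the route.
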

\begin{proof}
As demonstrated above, given $|A|$ sufficiently large, we can define $p_\epsilon$, the probability of taking a suboptimal action when following $\pi_{BC}(\cdot | \epsilon)$ is as follows:
\begin{eqnarray}
p_\epsilon &=& \beta \frac{\epsilon(|A|-1)}{|A|} + (1 - \beta) (1 - \epsilon + \frac{\epsilon(|A|-1)}{|A|})\\
&=& (1-\beta) (1 - \epsilon) +  \frac{\epsilon(|A|-1)}{|A|} \\
&\approx& 1 - \beta(1 - \epsilon).
\end{eqnarray}
The inequality then follows from the proof of Theorem 2.1 in \cite{ross2010efficient}.
\end{proof}

\section{Risk Analysis} \label{app:risk}
Addressing the worst-case performance is important for safe imitation learning \cite{brown2018efficient,brown2018risk}. We investigated the worst-case performance of the demonstrator as it compares to the worst-case performance of the policy learned using D-REX. The results are shown in Table~\ref{tab:drex_risk}. Our results show that D-REX is able to learn safer policies than the demonstrator on 6 out of 7 games via intent extrapolation. The results show that on all games, except for Pong, D-REX is able to find a policy with both higher expected utility (see Table~1 in the main text) as well as higher worst-case utility (Table~\ref{tab:drex_risk}) when compared to the worst case performance of the demonstrator. D-REX also has a better worst-case performance than BC and GAIL across all games except for Pong.

\begin{table*}
\caption{Comparison of the average and worst-case performance of D-REX with respect to the demonstrator. Results are the worst-case performance corresponding to the results shown in Table~1 in the main text. Bold denotes worst-case performance that is better than the worst-case demonstration.}
\label{tab:drex_risk}
\vskip 0.15in
\begin{center}
\begin{small}
\begin{tabular}{ccccc}
\toprule
 & \multicolumn{3}{c}{Worst-Case Performance}\\
 \midrule
Game &  Demonstrator & D-REX & BC & GAIL \\ 
\midrule 
Beam Rider & 900 &\textbf{2916} &	528 & 352 \\
Breakout & 	17 &	\textbf{62} &	13 & 0\\
Enduro &	37	& \textbf{152}	& 35 & 13\\
Pong &	-5 &	-21 &	\textbf{-2} & -14\\
Q*bert &	575 &	\textbf{7675} &	650 & 350\\
Seaquest &	320 &	\textbf{800} &	280 & 260\\
Space Invaders &	190 &	\textbf{575} &	120 & 235\\
\bottomrule
\end{tabular}
\end{small}
\end{center}
\end{table*}

\section{Live-Long Baseline} \label{app:livelong}
Here we describe a simple live-long baseline experiment to test that D-REX is actually learning something other than a positive bonus for living longer. Because we normalize the D-REX learned reward using a sigmoid, one concern is that the non-negativity of the sigmoid is the only thing that is needed to perform well on the Atari domain since games typically involve trying to stay alive as long as possible. We tested this by creating a live-long baseline that always rewards the agent with a +1 reward for every timestep. Table~\ref{tab:drex_baselines} shows that while a +1 reward is sufficient to achieve a moderate score on some games, it is insufficient to learn to play the games Enduro and Seaquest, both of which D-REX is able to learn to play. The reason that a +1 reward does not work on Enduro and Seaquest is that, in these games, it is possible to do nothing and cause an arbitrarily long episode. Thus, simply rewarding long episodes is not sufficient to learn to actually play. While a +1 reward was sufficient to achieve moderate to good scores on the other games, live-long reward is only able to surpass the performance of D-REX on Pong, which gives evidence that even if games where longer trajectories are highly correlated with the ground-truth return, D-REX is not simply rewarding longer episodes, but also rewarding trajectories that follow the demonstrators intention. This is also backed up by our reward attention heat maps shown later, which demonstrate that D-REX is paying attention to details in the observations which are correlated with the ground-truth reward. 

\begin{table*}
\caption{Comparison of D-REX with other imitation learning approaches. BC is behavioral cloning. Live-long assigns every observation a +1 reward and is run using an experimental setup identical to D-REX. }
\label{tab:drex_baselines}
\vskip 0.15in
\begin{center}
\begin{small}
\begin{tabular}{cccccc}
\toprule
 & \multicolumn{2}{c}{Demonstrator} & \multicolumn{3}{c}{Imitation method} \\ 
Game & Avg. & Best. & D-REX & Live-Long & BC\\
\midrule 
Beam Rider  & 1524 & 2216 & \textbf{7220} & 5583.5 & 1268.6\\
Breakout  & 34.5 & 59 & \textbf{94.7} & 68.85 & 29.75\\
Enduro  & 85.5 & 134 & \textbf{247.9} & 0 & 83.4\\
Pong  & 3.7 & \textbf{14} & -9.5 & -5.3 & 8.6\\
Q*bert  & 770 & 850 & \textbf{22543.8} & 17073.75 & 1013.75\\
Seaquest  & 524 & 720 & \textbf{801} & 1 & 530 \\
Space Invaders  & 538.5 & 930 & \textbf{1122.5} & 624 & 426.5 \\
\bottomrule
\end{tabular}
\end{small}
\end{center}
\end{table*}

\section{D-REX Details} \label{app:drex_implementation}
Code and videos can be found at our project website \url{https://dsbrown1331.github.io/CoRL2019-DREX/}.

\subsection{Demonstrations}
To create the demonstrations, we used a partially trained Proximal Policy Optimization (PPO) agent that was checkpointed every 5 optimization steps (corresponds to 10,240 simulation steps) for MuJoCo experiments and 50 optimization steps (corresponds to 51,200 simulation steps) for Atari experiments. To simulate suboptimal demonstrations, we selected demonstration checkpoints such that they resulted in an average performance that was significantly better than random play, but also significantly lower than the maximum performance achieved by PPO when trained to convergence on the ground-truth reward. All checkpoints are included in the source code included in the supplemental materials.

\subsection{Behavioral cloning}

\textbf{MuJoCo experiments} We generated a trajectory of length 1,000, and the given 1,000 pairs of data is used for training. The policy network is optimized with $L_2$ loss for 10,000 iterations using Adam optimizer with a learning rate of 0.001 and a minibatch size of 128. Weight decay regularization is also applied in addition to regular loss term with a coefficient of 0.001. A multi-layer perceptron (MLP) having 4 layers and 256 units in the middle is used to parameterize a policy.

\textbf{Atari experiments} We used the state-action pairs from the 10 demonstrations and partitioned them into an 80\% train 20\% validation split. We used the Nature DQN network architecture and trained the imitation policy using Adam with a learning rate of 0.0001 and a minibatch size of 32. The state consists of four stacked frames which are normalized to have a value between 0 and 1, and the scores in the game scene are masked as it is done in \citet{browngoo2019trex}. We used the validation set for early stopping. In particular, after every 1000 updates on the training data we fully calculated the validation error of the current model. We trained the imitation policy until the validation loss failed to improve for 6 consecutive calculations of the validation error.

\subsection{Synthetic rankings}
We then used the cloned policy and generated 100 synthetic demonstrations for different noise levels. For the MuJoCo experiments, we used 20 different noise levels evenly spaced over the interval $[0.0,1.0)$ and generated 5 trajectories for each level.

For the Atari experiments, we used the noise schedule $\mathcal{E} = (1.0, 0.75, 0.5, 0.25, 0.02)$ and generated $K=20$ trajectories for each level. We found that a non-zero noise was necessary for most Atari games since deterministic policies learned through behavior cloning will often get stuck in a game and fail to take an action to continue playing. For example, in Breakout, it is necessary to release the ball after it falls past the paddle, and a deterministic policy may fail to fire a new ball. For a similar reason, we also found it beneficial to include a few examples of no-op trajectories to encourage the agent to actually complete the game. For each game, we created an additional ``no-op" demonstration set comprised of four length 500 no-op demonstrations. Without these no-op demonstrations, we found that often the learned reward function would give a small positive reward to the agent for just staying alive and sometimes the RL algorithm would decide to just sit at the start screen and accumulate a nearly indefinite stream of small rewards rather than play the game. Adding no-op demonstrations as the least preferred demonstrations shapes the reward function such that it encourages action and progress. While this does encode some amount of domain knowledge into the reward function, it is common that doing nothing is worse than actually attempting to complete a task. We note that in extremely risky scenarios, it may be the case that always taking the no-op action is optimal, but leave these types of domains for future work.

\subsection{Reward function training}

For reward function training, we generally followed the setup used in \cite{browngoo2019trex}. We build a dataset of paired trajectory snippets with ranking, first by choosing two trajectories from given demonstrations and synthetic demonstrations, then by subsampling a snippet from each of trajectory.

\textbf{MuJoCo experiments} We built 3 datasets having different 5,000 pairs and trained a reward function for each of dataset using a neural network. Then, the ensemble of three neural network was used for reinforcement learning step. When two trajectories are selected from synthetic demonstration set to build a dataset, we discarded a pair whose epsilon difference is smaller than 0.3. This stabilizes a reward learning process by eliminating negative samples. Also, when subsampling from a whole trajectory, we limited the maximum length of snippet as 50 while there is no limitation on the minimum length. We then trained each neural network for 1,000 interactions with Adam optimizer with a learning rate of 1e-4 and minibatch size of 64. Weight decay regularization is also used with a coefficient of 0.01. A 3-layer MLP with 256 units in the middle is used to parameterize a reward function.

\textbf{Atari experiments} To generate training samples, we performed data augmentation to generate 40,000 training pairs. We first sampled two noise levels $\epsilon_i$ and $\epsilon_j$, we then randomly sampled one trajectory from each noise level. Finally, we randomly cropped each trajectory keeping between 50 and 200 frames. Following the advice in \cite{browngoo2019trex}, we also enforced a progress constraint such that the randomly cropped snippet from the trajectory with lower noise started at an observation timestep no earlier than the start of the snippet from the higher noise level. To speed up learning, we also only kept every 4th observation. Because observations are stacks of four frames this only removes redundant information from the trajectory. We assigned each trajectory pair a label indicating which trajectory had the lowest noise level.

Given our 40,000 labeled trajectory pairs, we optimized the reward function $\hat{R}_\theta$ using Adam with a learning rate of 1e-5. We held out 20\% of the data as a validation set and optimized the reward function on the training data using the validation data for early stopping. In particular, after every 1000 updates we fully calculated the validation error of the current model. We stopped training once the validation error failed to improve for 6 consecutive calculations of the validation error. 

We used an architecture having four convolutional layers with sizes 7x7, 5x5, 3x3, and 3x3, with strides 3, 2, 1, and 1. The 7x7 convolutional layer used 32 filters and each subsequent convolutional layer used 16 filters and LeakyReLU non-linearities. We then used a fully connected layer with 64 hidden units and a single scalar output. We fed in stacks of 4 frames with pixel values normalized between 0 and 1 and masked reward-related information from the scene; the game score and number of lives, the sector number and number of enemy ships left on Beam Rider, the bottom half of the dashboard for Enduro to mask the position of the car in the race, the number of divers found and the oxygen meter for Seaquest, and the power level and inventory for Hero.

\subsection{Policy optimization}
We optimized a policy by training a PPO agent on the learned reward function. We used the default hyperparameters in OpenAI Baselines.\footnote{https://github.com/openai/baselines} Due to the variability that results from function approximation when using PPO, we trained models using seeds 0, 1, and 2 and reported the best results among them.

\textbf{MuJoCo experiments} We trained an agent for 1 million steps, and gradient is estimated for every 4,096 simulation steps. As same as the original OpenAI implementation, we normalized a reward with running mean and standard deviation. Model ensemble of three neural network is done by averaging such normalized reward.

\textbf{Atari experiments} 9 parallel workers are used to collect trajectories for policy gradient estimation. To reduce reward scaling issues, we followed the procedure proposed by \citet{browngoo2019trex} and normalized predicted rewards by feeding the output of $\hat{R}_\theta(s)$ through a sigmoid function before passing it to PPO. We trained PPO on the learned reward function for 50 million frames to obtain our final policy. 

% \subsection{Hardware}
% We trained and evaluated all of our models on an NVIDIA TITAN V GPU.

\section{GAIL}
We used the default implementation of GAIL from OpenAI Baselines for Mujoco. For Atari we made a few changes to get the Baselines implementation to work with raw pixel observations. For the generator policy we used the Nature DQN architecture. The discriminator takes in a state (stack of four frames) and action (represented as a 2-d one-hot vector of shape (84,84,$|\mathcal{A}|$) that is concatenated to the 84x84x4 observation). The architecture for the discriminator is the same as the generator, except that it only outputs two logit values for discriminating between the demonstrations and the generator. We performed one generator update for every discriminator update.

\section{Reward Extrapolation and Attention Heatmaps} \label{app:extrapolate_heat}
Figure~\ref{fig:atari_reward_extrapolation} shows the reward extrapolation plots for all seven Atari games.
Figures~\ref{fig:beamrider_minmax}--\ref{fig:spaceinvaders_minmax} show reward heatmaps for all seven Atari games. We generated the heatmaps using the technique proposed in \cite{browngoo2019trex}. We take a 3x3 mask and run it over every frame in an observation and compute the difference in predicted reward before and after the mask is applied. We then use the cumulative sum over all masks for each pixel to plot the heatmaps. 

\newpage

\begin{figure}
    \centering
    \subfigure[Beam Rider]{
        \includegraphics[width=.31\linewidth]{figs/beamrider_gt_vs_pred_rewards_progress_sigmoid.png}
        
    }
     \subfigure[Breakout]{
        \includegraphics[width=.31\linewidth]{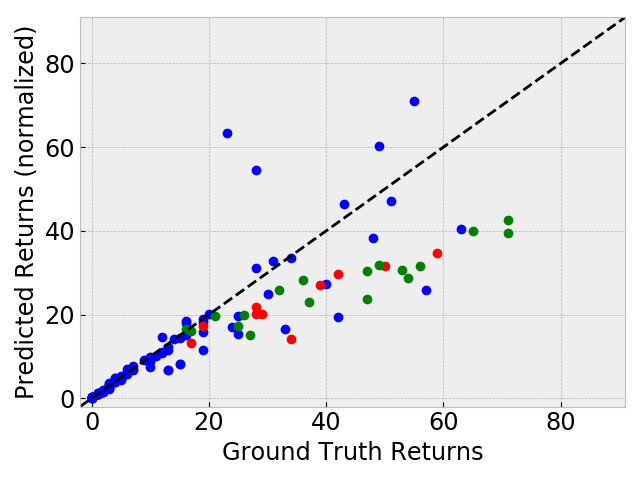}
        
    }
    \subfigure[Enduro]{
        \includegraphics[width=.31\linewidth]{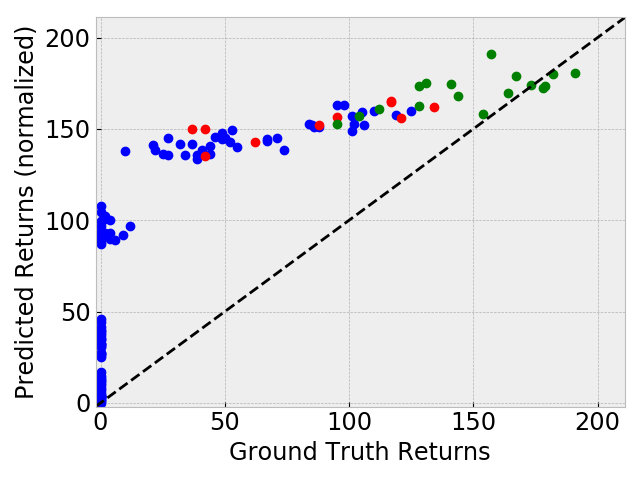}
        
    }
     \subfigure[Pong]{
        \includegraphics[width=.32\linewidth]{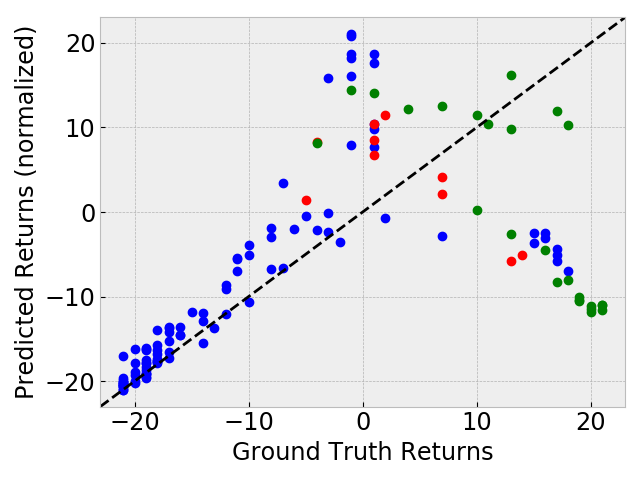}
        
    }
     \subfigure[Q*bert]{
        \includegraphics[width=.32\linewidth]{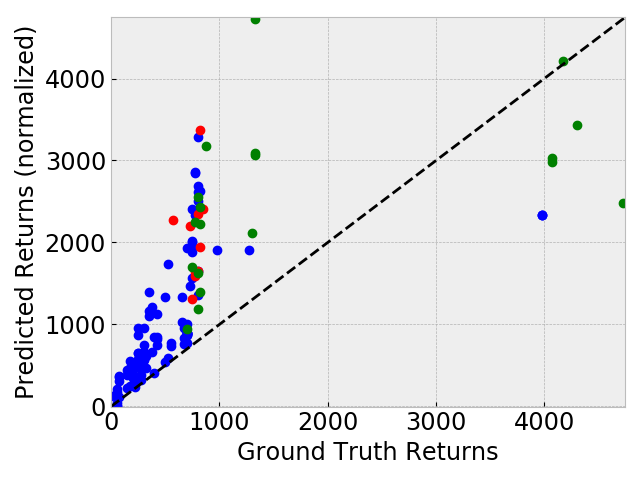}
        
    }
    \subfigure[Seaquest]{
        \includegraphics[width=.32\linewidth]{figs/seaquest_gt_vs_pred_rewards_progress_sigmoid.png}
        
    }
    \subfigure[Space Invaders]{
        \includegraphics[width=.32\linewidth]{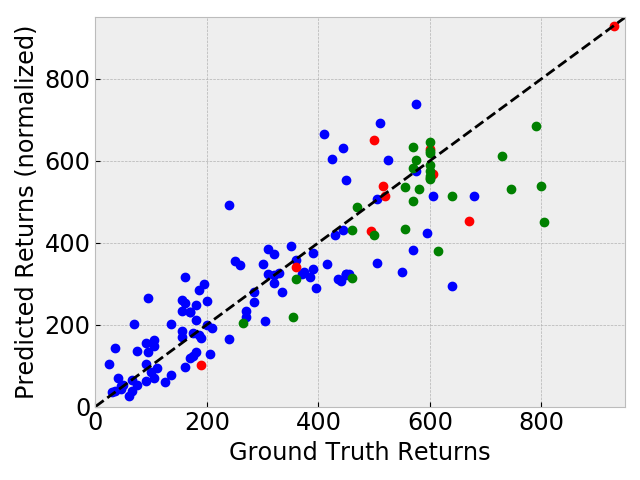}
        
    }
    \caption{Extrapolation plots for Atari games. Blue dots represent synthetic demonstrations generated via behavioral cloning with different amounts of noise injection. Red dots represent actual demonstrations, and green dots represent additional trajectories not seen during training. We compare ground truth returns over demonstrations to the predicted returns using D-REX (normalized to be in the same range as the ground truth returns). }
    \label{fig:atari_reward_extrapolation}
\end{figure}

\newpage

\begin{figure*}
    \centering
    \subfigure[Beam Rider observation with maximum predicted reward]{
        \includegraphics[width=\linewidth]{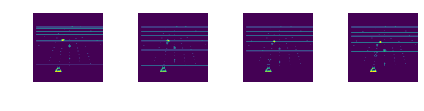}
        
    }
     \subfigure[Beam Rider reward model attention on maximum predicted reward]{
        \includegraphics[width=\linewidth]{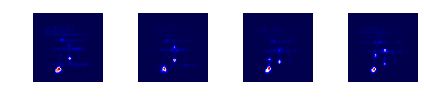}
        
    }
    \subfigure[Beam Rider observation with minimum predicted reward]{
        \includegraphics[width=\linewidth]{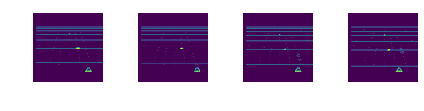}
        
    }
     \subfigure[Beam Rider reward model attention on minimum predicted reward]{
        \includegraphics[width=\linewidth]{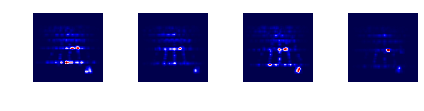}
        
    }
    \caption{Maximum and minimum predicted observations and corresponding attention maps for Beam Rider across a held-out set of 15 demonstrations. The attention maps show that the reward is a function of the status of the controlled ship as well as the enemy ships and missiles.}
    \label{fig:beamrider_minmax}
    
\end{figure*}

\begin{figure*}
    \centering
    \subfigure[Breakout observation with maximum predicted reward]{
        \includegraphics[width=\linewidth]{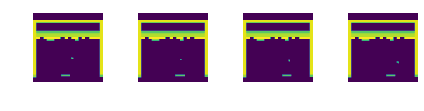}
        \label{subfig:breakout_max}
    }
    \subfigure[Breakout reward model attention on maximum predicted reward]{
        \includegraphics[width=\linewidth]{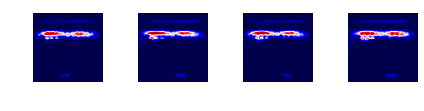}
        \label{subfig:breakout_max_attention}
    }
    \subfigure[Breakout observation with minimum predicted reward]{
        \includegraphics[width=\linewidth]{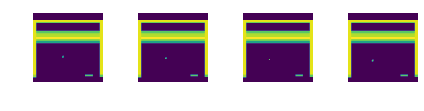}
        \label{subfig:breakout_min}
    }
     \subfigure[Breakout reward model attention on minimum predicted reward]{
        \includegraphics[width=\linewidth]{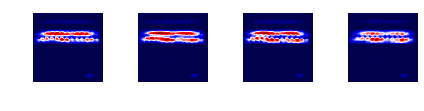}
        \label{subfig:breakout_min_attention}
    }
    \caption{Maximum and minimum predicted observations and corresponding attention maps for Breakout across a held-out set of 15 demonstrations. The observation with maximum predicted reward shows many of the bricks destroyed. The network has learned to put most of the reward weight on the remaining bricks. The observation with minimum predicted reward is an observation where none of the bricks have been destroyed. }
    \label{fig:breakout_minmax}
\end{figure*}

\begin{figure*}
    \centering
    \subfigure[Enduro observation with maximum predicted reward]{
        \includegraphics[width=\linewidth]{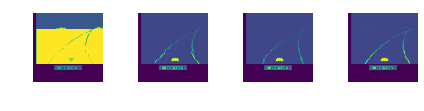}
        
    }
     \subfigure[Enduro reward model attention on maximum predicted reward]{
        \includegraphics[width=\linewidth]{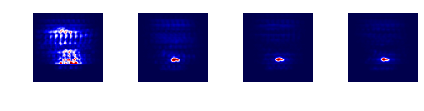}
        
    }
    \subfigure[Enduro observation with minimum predicted reward]{
        \includegraphics[width=\linewidth]{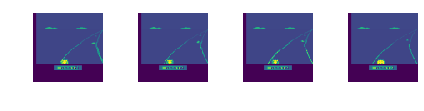}
        
    }
     \subfigure[Enduro reward model attention on minimum predicted reward]{
        \includegraphics[width=\linewidth]{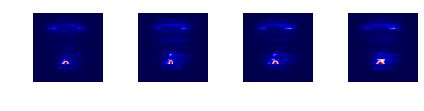}
        
    }
    \caption{Maximum and minimum predicted observations and corresponding attention maps for Enduro across a held-out set of 15 demonstrations. The observation with maximum predicted reward shows the car passing from one section of the race track to another as shown by the change in lighting. The observation with minimum predicted reward shows the controlled car falling behind another racer with attention focusing on the car being controlled as well as the speedometer.}
    \label{fig:enduro_minmax}
\end{figure*}

\begin{figure*}
    \centering
    \subfigure[Pong observation with maximum predicted reward]{
        \includegraphics[width=\linewidth]{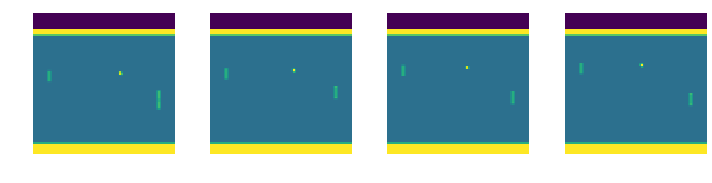}
        
    }
     \subfigure[Pong reward model attention on maximum predicted reward]{
        \includegraphics[width=\linewidth]{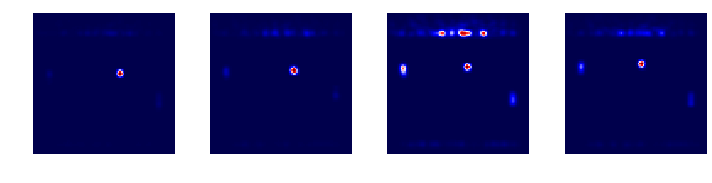}
       
    }
    \subfigure[Pong observation with minimum predicted reward]{
        \includegraphics[width=\linewidth]{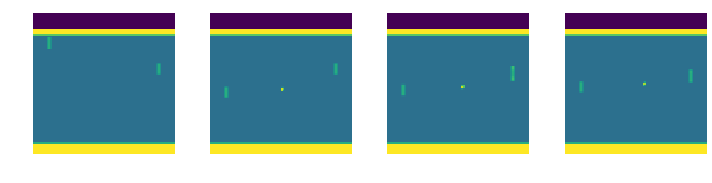}
        
    }
     \subfigure[Pong reward model attention on minimum predicted reward]{
        \includegraphics[width=\linewidth]{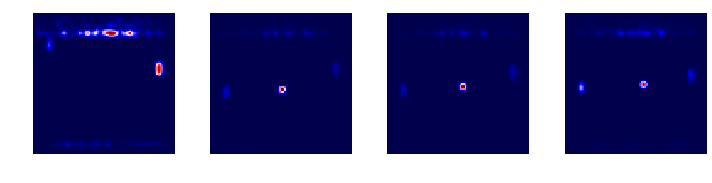}
        
    }
    \caption{Maximum and minimum predicted observations and corresponding attention maps for Pong across  a held-out set of 15 demonstrations. The network attends to the ball and paddles along with some artifacts outside the playing field. The observation with minimum predicted reward shows the ball being sent back into play after the opponent has scored.}
    \label{fig:pong_minmax}
\end{figure*}

\begin{figure*}
    \centering
    \subfigure[Q*bert observation with maximum predicted reward]{
        \includegraphics[width=\linewidth]{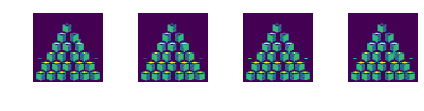}
        
    }
     \subfigure[Q*bert reward model attention on maximum predicted reward]{
        \includegraphics[width=\linewidth]{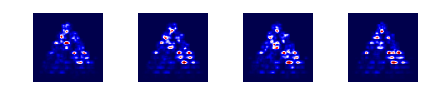}
       
    }
    \subfigure[Q*bert observation with minimum predicted reward]{
        \includegraphics[width=\linewidth]{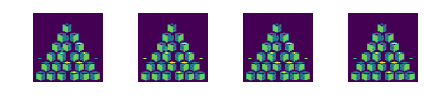}
        
    }
     \subfigure[Q*bert reward model attention on minimum predicted reward]{
        \includegraphics[width=\linewidth]{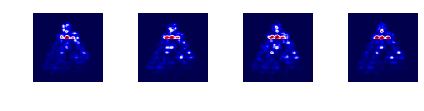}
        
    }
    \caption{Maximum and minimum predicted observations and corresponding attention maps for Q*bert across  a held-out set of 15 demonstrations. The network attention is focused on the different stairs, but is difficult to attribute any semantics to the attention maps.}
    \label{fig:qbert_minmax}
\end{figure*}

\begin{figure*}
    \centering
    \subfigure[Seaquest observation with maximum predicted reward]{
        \includegraphics[width=\linewidth]{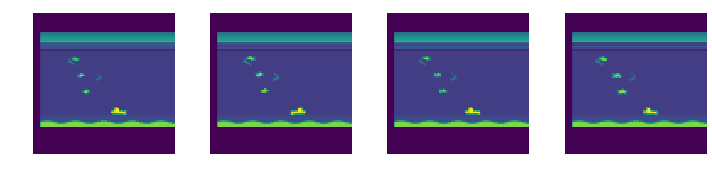}
        
    }
     \subfigure[Seaquest reward model attention on maximum predicted reward]{
        \includegraphics[width=\linewidth]{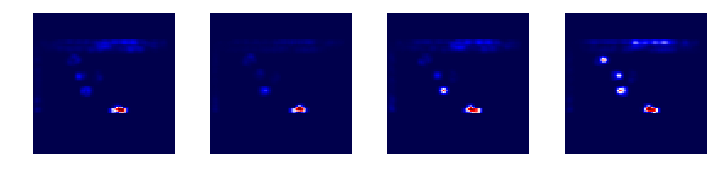}
       
    }
    \subfigure[Seaquest observation with minimum predicted reward]{
        \includegraphics[width=\linewidth]{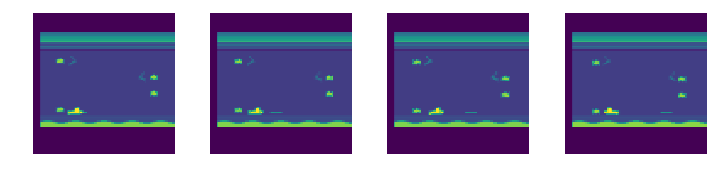}
        
    }
     \subfigure[Seaquest reward model attention on minimum predicted reward]{
        \includegraphics[width=\linewidth]{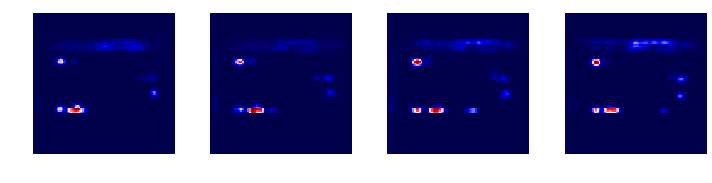}
        
    }
    \caption{Maximum and minimum predicted observations and corresponding attention maps for Seaquest across a held-out set of 15 demonstrations. The observation with maximum predicted reward shows the submarine in a safe location with no immediate threats. The observation with minimum predicted reward shows the submarine one frame before it is hit and destroyed by an enemy shark. This is an example of how the network has learned a shaped reward that helps it play the game better than the demonstrator. The network has learned to give most attention to nearby enemies and to the controlled submarine. }
    \label{fig:seaquest_minmax}
\end{figure*}

\begin{figure*}
    \centering
    \subfigure[Space Invaders observation with maximum predicted reward]{
        \includegraphics[width=\linewidth]{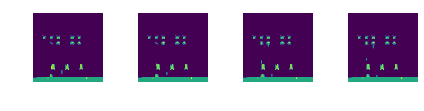}
        
    }
     \subfigure[Space Invaders reward model attention on maximum predicted reward]{
        \includegraphics[width=\linewidth]{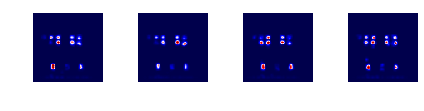}
       
    }
    \subfigure[Space Invaders observation with minimum predicted reward]{
        \includegraphics[width=\linewidth]{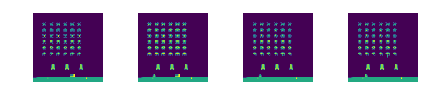}
        
    }
     \subfigure[Space Invaders reward model attention on minimum predicted reward]{
        \includegraphics[width=\linewidth]{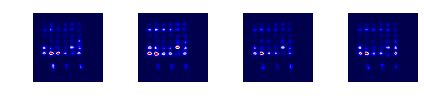}
        
    }
    \caption{Maximum and minimum predicted observations and corresponding attention maps for Space Invaders across a held-out set of 15 demonstrations. The observation with maximum predicted reward shows an observation where most of the all the aliens have been successfully destroyed and the protective barriers are still intact. The attention map shows that the learned reward function is focused on the barriers and aliens, with less attention to the location of the controlled ship. The observation with minimum predicted reward shows the very start of a game with all aliens still alive. The network attends to the aliens and barriers, with higher weight on the aliens and barrier closest to the space ship.}
    \label{fig:spaceinvaders_minmax}
\end{figure*}

% \bibliography{roar_refs}
% \bibliographystyle{plain}

% \end{document}

\end{document}